%%%% ijcai24.tex

\typeout{IJCAI--24 Instructions for Authors}

% These are the instructions for authors for IJCAI-24.

\documentclass{article}
\pdfpagewidth=8.5in
\pdfpageheight=11in

% The file ijcai24.sty is a copy from ijcai22.sty
% The file ijcai22.sty is NOT the same as previous years'
\usepackage{ijcai24}

% Use the postscript times font!
\usepackage{times}
\usepackage{soul}
\usepackage{url}
\usepackage[hidelinks]{hyperref}
\usepackage[utf8]{inputenc}
\usepackage[small]{caption}
\usepackage{graphicx}
\usepackage{amsmath}
\usepackage{amsthm}
\usepackage{amsfonts}
\usepackage{mathtools}
\usepackage{mathrsfs}\usepackage{booktabs}
\usepackage{bm}
\usepackage{algorithm}
\usepackage{algorithmic}
\usepackage{framed}
\usepackage[switch]{lineno}
\mathtoolsset{showonlyrefs}
\usepackage{float}
\usepackage{lipsum}
\usepackage{amsmath}
\allowdisplaybreaks[4]

\newcommand{\bmx}{\bm{x}}
\newcommand{\bmy}{\bm{y}}
\newcommand{\bmz}{\bm{z}}

\newcommand{\bmv}{\bm{v}}
\newcommand{\bmr}{\bm{r}}

\newcommand{\bmf}{\bm{f}}
\newcommand{\ojzj}{OneJumpZeroJump}

% Comment out this line in the camera-ready submission
% \linenumbers

\urlstyle{same}

% the following package is optional:
%\usepackage{latexsym}

% See https://www.overleaf.com/learn/latex/theorems_and_proofs
% for a nice explanation of how to define new theorems, but keep
% in mind that the amsthm package is already included in this
% template and that you must *not* alter the styling.

\newtheorem{theorem}{Theorem}
\newtheorem{definition}{Definition}

% Following comment is from ijcai97-submit.tex:
% The preparation of these files was supported by Schlumberger Palo Alto
% Research, AT\&T Bell Laboratories, and Morgan Kaufmann Publishers.
% Shirley Jowell, of Morgan Kaufmann Publishers, and Peter F.
% Patel-Schneider, of AT\&T Bell Laboratories collaborated on their
% preparation.

% These instructions can be modified and used in other conferences as long
% as credit to the authors and supporting agencies is retained, this notice
% is not changed, and further modification or reuse is not restricted.
% Neither Shirley Jowell nor Peter F. Patel-Schneider can be listed as
% contacts for providing assistance without their prior permission.

% To use for other conferences, change references to files and the
% conference appropriate and use other authors, contacts, publishers, and
% organizations.
% Also change the deadline and address for returning papers and the length and
% page charge instructions.
% Put where the files are available in the appropriate places.

% PDF Info Is REQUIRED.

% Please leave this \pdfinfo block untouched both for the submission and
% Camera Ready Copy. Do not include Title and Author information in the pdfinfo section
\pdfinfo{
/TemplateVersion (IJCAI.2024.0)
}

\title{Maintaining Diversity Provably Helps in Evolutionary Multimodal Optimization}

% \author{
%     Anonymous
% }

% \iffalse
% Single author syntax
% \author{
%     Author Name
%     \affiliations
%     Affiliation
%     \emails
%     email@example.com
% }

% Multiple author syntax (remove the single-author syntax above and the \iffalse ... \fi here)

\author{
Shengjie Ren$^1$
\and
Zhijia Qiu$^1$\and
Chao Bian$^1$\and
Miqing Li$^2$\And
Chao Qian$^1$ \\
\affiliations
$^1$National Key Laboratory for Novel Software Technology, Nanjing University, Nanjing 210023, China\\
School of Artificial Intelligence, Nanjing University, Nanjing 210023, China\\
$^2$School of Computer Science, University of Birmingham, Birmingham B15 2TT, U.K.\\
\emails
shengjieren36@gmail.com,
% \url{http://www.springer.com/gp/computer-science/lncs} \and
% ABC Institute, Rupert-Karls-University Heidelberg, Heidelberg, Germany\\
\{qiuzj, bianc, qianc\}@lamda.nju.edu.cn,
m.li.8@bham.ac.uk
}

\begin{document}

\maketitle

\begin{abstract}
    In the real world, there exist a class of optimization problems that multiple (local) optimal solutions in the solution space correspond to a single point in the objective space. In this paper, we theoretically show that for such multimodal problems, a simple method that considers the diversity of solutions in the solution space can benefit the search in evolutionary algorithms (EAs). Specifically, we prove that the proposed method, working with crossover, can help enhance the exploration, leading to polynomial or even exponential acceleration on the expected running time. This result is derived by rigorous running time analysis in both single-objective and multi-objective scenarios, including $(\mu+1)$-GA solving the widely studied single-objective problem, Jump, and NSGA-II and SMS-EMOA (two well-established multi-objective EAs) solving the widely studied bi-objective problem, OneJumpZeroJump. Experiments are also conducted to validate the theoretical results. We hope that our results may encourage the exploration of diversity maintenance in the solution space for multi-objective optimization, where existing EAs usually only consider the diversity in the objective space and can easily be trapped in local optima.
\end{abstract}

\section{Introduction}

In the real world, there exist a class of optimization problems where multiple (local) optimal solutions in the solution space correspond to a single point in the objective space, such as in truss structure optimization~\cite{Reintjes2022}, space mission design~\cite{schutze2011computing}, rocket engine design~\cite{Kudo2011}, and functional brain imaging~\cite{sebag2005multi}. Such problems belong to multi-modal optimization problems (MMOPs). Note that there are usually two types of MMOPs~\cite{deb2010,preuss2021multimodal}: one with multiple local optima (typically with different objective function values) and the other with multiple (local) optimal solutions having identical objective function value. In this work, MMOPs we refer to is the latter.

Evolutionary algorithms (EAs)~\cite{back:96} are a kind of randomized heuristic optimization algorithms, inspired by natural evolution. They maintain a set of solutions (called a population), and iteratively improve the population by generating new offspring solutions and replacing inferior ones. EAs, due to their population-based nature and the ability of performing the search globally, have become a popular tool to solve MMOPs~\cite{DAS201171,Cheng2018,Tian2021,pan2023improved}. Since 1990's, to deal with MMOPs, many ideas have been proposed to improve EAs’ ability, including using niching technique~\cite{Petrowski,Shir2012,Preuss2015}, designing novel reproduction operators and population update methods~\cite{Kennedy2010,storn1997differential,liang2024evolutionary}, and transforming an MMOP into a multiobjective optimization problem~\cite{Wessing2013,Cheng2018,liu2022zoopt}.

% including using niching technique~\cite{Petrowski,Shir2012,Preuss2015}, designing novel reproduction operators and population update methods~\cite{storn1997differential,Kennedy2010}, and transforming an MMOP into a multiobjective optimization problem~\cite{Wessing2013,Cheng2018}.

%However, since most EAs have been originally designed for conventional single-objective optimization which involves only one optimal solution, they are not directly applicable to MMOPs due to their poor capability of population diversity preservation. 

In contrast to algorithm design of EAs, theoretical studies (e.g., running time complexity analyses) in the area are relatively underdeveloped. This is mainly because sophisticated behaviors of EAs can make theoretical analysis rather difficult. Nevertheless, since the 2000s, there is increasingly interest in rigorously analyzing EAs. Some studies worked on running time analysis tools, such as drift analysis~\cite{he2001drift,oliveto2011simplified,doerrmulti}, fitness level method~\cite{Wegener2002,sudholt2011general,dang2014refined}, and switch analysis~\cite{yu2014switch,yu2015switch,qian2016lower,bian2018general}. Some other studies devoted themselves to the analysis of various EAs for solving synthetic or combinatorial optimization problems~\cite{neumannwitt10,augerdoerr11,zhou2019evolutionary,doerr2020theory}. These theoretical results can help understand the mechanisms of EAs and inspire the design of more efficient EAs in practice.

% In the evolutionary search, preserving diversity within the population can prevent premature convergence, and is key to improving the performance of EAs. To explicitly maintain the population diversity, many methods have been proposed~\cite{liu2017zoopt,pan2023improved}, e.g., eliminating duplicates~\cite{dang2016escaping}, using island models~\cite{neumann2011effectiveness,squillero2016divergence,liang2024evolutionary}, and niching techniques that aim to establish niches of similar search points and prevent these niches from becoming extinct~\cite{shir2012niching}. 
% A diverse population has higher chances to find dissimilar individuals and to create fitter offspring by facilitating crossover. This viewpoint has been substantiated by many theoretical analysis\cite{neumann2011effectiveness,dang2016escaping,sudholt2020benefits}. 

In this paper, we consider running time analysis of EAs for MMOPs. We analytically show that when solving MMOPs, considering the diversity of the solutions in the solution space can be beneficial for the search of EAs. Specifically, we propose a simple method to deal with multiple (local) optimal solutions via comparing their crowdedness in the solution space. We consider both single-objective and multi-objective optimization cases, and incorporate the proposed method into a classical single-objective EA, $(\mu+1)$-GA, and two well-established multi-objective EAs (MOEAs), NSGA-II and SMS-EMOA. 

For the single-objective case, we prove
that the expected running time of $(\mu+1)$-GA using the proposed method for solving (i.e., finding the global optimum) Jump~\cite{Droste02}, a widely studied single-objective problem, is $O(\mu^2 4^k+  \mu n\log n+n\sqrt{k}(\mu\log\mu+\log n) )$, where $n$ is the problem size, and $k\le n/4$, a parameter of Jump. Since the expected running time of the original $(\mu+1)$-GA for solving Jump is $O((40e\mu (\mu+1) )^k (\frac{1}{\mu-1})^{k-1}) \frac{10e}{9} \frac{n^k}{(k-1)!} + n\sqrt{k}(\mu \log \mu + \log n)=O(\mu \sqrt{k}  (40e^2 \mu n/k)^k)$ \cite{doerr2023crossover}, our method can bring a polynomial acceleration when $k$ is a constant, e.g., $k=4$, and bring an exponential acceleration when $k$ is large, e.g., $k=n^{1/4}$.

For the multi-objective case, we prove that the expected running time of NSGA-II and SMS-EMOA (with crossover) using the proposed method for solving (i.e., finding the whole Pareto front) 
% 这里可能需要改动，标记一下
\ojzj~\cite{doerr2021ojzj}, a widely studied bi-objective problem, is both $O(\mu^2 4^{k} + \mu n\log n )$, where $\mu$ is the population size, $n$ is the problem size, and $k \le n/4$, a parameter of \ojzj. Note that the expected running time of the original NSGA-II~\cite{doerr2023crossover} and SMS-EMOA (analyzed in this paper) for solving \ojzj\ is 
$O(\mu^2\sqrt{k}(Cn/k)^k)$, where $C$ is a constant and $k=o(\sqrt{n})$. Thus, our method can also bring a polynomial acceleration when $k$ is a constant, and bring an exponential acceleration when $k$ is large, e.g., $k=n^{1/4}$. The main reason for the acceleration resulting from our method is that it can explicitly preserve solutions with high diversity, and thus can help enhance the exploration by working with crossover. Experiments are also conducted to validate the theoretical results. In addition, it is worth mentioning that since the expected running time of SMS-EMOA (without crossover) for solving \ojzj\ is $O(\mu n^k)$~\cite{bian23stochastic}, our results also contribute to the theoretical understanding of the effectiveness of using crossover in SMS-EMOA.

\section{Preliminaries}
In this section, we first introduce multimodal optimization and the considered benchmark problems, Jump and \ojzj, followed by the description of the studied algorithms, $(\mu+1)$-GA, NSGA-II and SMS-EMOA. 

\subsection{Multimodal Optimization}

Multimodal optimization refers to the optimization scenario that multiple (local) optimal solutions in the solution space have the same objective value, which arises in many real-world applications, e.g., flow shop scheduling~\cite{Basseur2002}, rocket engine optimization~\cite{Kudo2011}, and architecture design~\cite{Tian2021}. In this paper, we study two pseudo-Boolean (i.e., the solution space $\mathcal{X}=\{0,1\}^n$) multi-modal optimization problems (MMOPs), Jump and \ojzj, which have been widely used in EAs' theoretical analyses~\cite{Droste02,doerr2021ojzj,bian23stochastic,doerr2023ojzj,doerr2023lower,doerr2023crossover,lu2024towards}.   

The Jump problem as presented in Definition~\ref{def:jump}, is to maximize the number of 1-bits of a solution, except for a valley around $1^n$ (the solution with all 1-bits) where the number of 1-bits should be minimized. Its optimal solution is $1^n$ with function value $n+k$. The left subfigure of Figure~\ref{fig:ojzj} illustrates the function value with respect to the number of 1-bits of a solution. We can see that $1^n$ is global optimal, and any solution with $(n-k)$ 1-bits is local optimal with objective value $n$; thus, the Jump problem is multi-modal, since multiple local optimal solutions correspond to an objective value.
\begin{definition}[\cite{Droste02}]\label{def:jump} 
	The Jump problem is to find an $n$ bits binary string which maximizes
	\[f(\bmx) = \begin{cases}
	k+|\bmx|_1, & \text{if }|\bmx|_1 \leq n-k\text{ or } \bmx=1^n,\\
	n-|\bmx|_1, & \text{else},
\end{cases}\]
    where $k\in \mathbb{Z} \wedge 2\le k<n$, and $|\bmx|_1$ denotes the number of 1-bits in $\bmx$.
\end{definition}

The \ojzj\ problem is a bi-objective counterpart of the Jump problem. For multi-objective optimization, several objective functions (which are usually conflicting) need to be optimized simultaneously, and thus there does not exist a canonical complete order in the solution space $\mathcal{X}$. And usually the Pareto domination relation is used (Definition~\ref{def_Domination}) to compare solutions. A solution is \emph{Pareto optimal} if it is not dominated by any other solution in $\mathcal{X}$, and the set of objective vectors of all the Pareto optimal solutions is called the \emph{Pareto front}. The goal of multi-objective optimization is to find the Pareto front or its good approximation.

\begin{figure}\centering
        \begin{minipage}[c]{0.48\linewidth}\centering
		\includegraphics[width=1\linewidth]{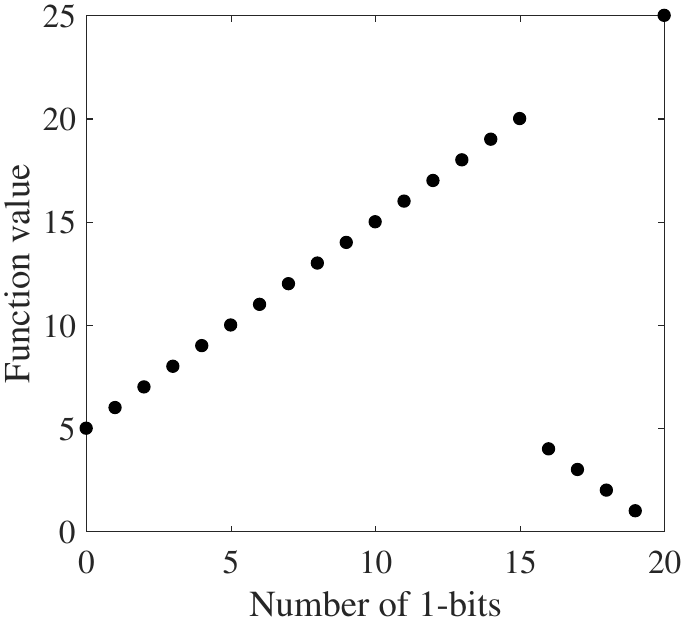}
	\end{minipage}
        \hspace{0.1em}
        \begin{minipage}[c]{0.48\linewidth}\centering
		\includegraphics[width=1\linewidth]{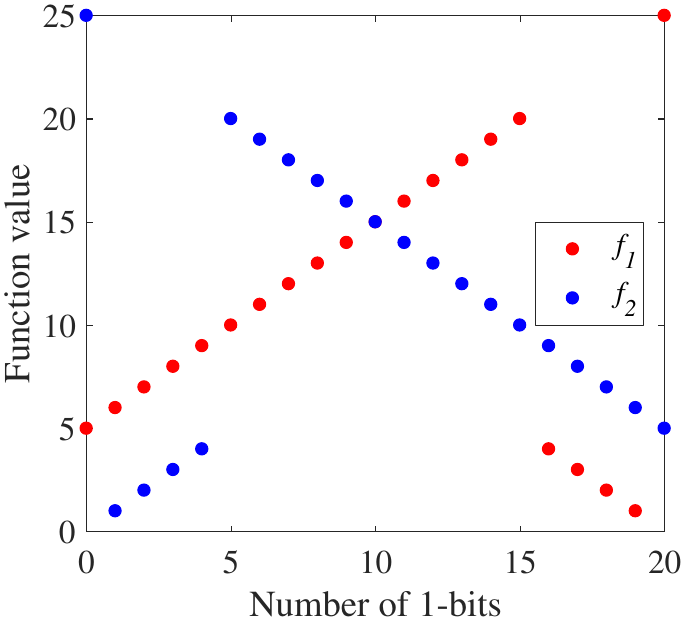}
	\end{minipage}
	\caption{The function value of the Jump and \ojzj\ problems vs. the number of 1-bits of a solution  when $n=20$ and $k=5$. Left subfigure: Jump; right subfigure: \ojzj.}\label{fig:ojzj}
\end{figure}

\begin{definition}\label{def_Domination}
	Let $\bm f = (f_1,f_2,\ldots, f_m):\mathcal{X} \rightarrow \mathbb{R}^m$ be the objective vector. For two solutions $\bmx$ and $\bmy\in \mathcal{X}$:
	\begin{itemize}
		\item $\bmx$ \emph{weakly dominates} $\bmy$  (denoted as $\bmx \succeq \bmy$) if for any $1 \leq i \leq m, f_i(\bmx) \geq f_i(\bmy)$;
		\item $\bmx$ \emph{dominates} $\bmy$ (denoted as $\bmx\succ \bmy$) if $\bmx \succeq \bmy$ and $f_i(\bmx) > f_i(\bmy)$ for some $i$;
		\item  $\bmx$ and $\bmy$ are \emph{incomparable} if neither $\bmx\succeq \bmy$ nor $\bmy\succeq \bmx$.
	\end{itemize}
\end{definition}

As presented in Definition~\ref{def:ojzj} below, the first objective of \ojzj\ is the same as the Jump problem, while the second objective is isomorphic to the first one, with the roles of 1-bits and 0-bits exchanged. The Pareto front of the \ojzj \ problem is 
$ \{(a, n+2k-a)\mid a\in[2k..n]\cup\{k, n+k\}\}$, whose size is $(n-2k+3)$, and the Pareto optimal solution corresponding to $(a, n+2k-a)$, $a\in[2k..n]\cup\{k, n+k\}$, is any solution with $(a-k)$ 1-bits. Note that we use $[l..r]$ (where $ l,r\in \mathbb{Z}, l\le r$) to denote the set  $\{l,l+1,\ldots, r\}$ of integers throughout the paper. 
The right subfigure of Figure~\ref{fig:ojzj} illustrates the values of $f_1$ and $f_2$ with respect to the number of 1-bits of a solution. Since any solution with $(a-k)$ 1-bits (where $a\in [2k..n]$) is Pareto optimal with objective vector $(a, n+2k-a)$, the \ojzj\ problem is multimodal.
\begin{definition}[\cite{doerr2021ojzj}]\label{def:ojzj}
	The \ojzj \ problem is to find $n$ bits binary strings which maximize
	\[ f_1(\bmx) = \begin{cases}
		k+|\bmx|_1, & \text{if }|\bmx|_1 \leq n-k\text{ or } \bmx=1^n,\\
		n-|\bmx|_1, & \text{else},
	\end{cases}\]
	\[f_2(\bmx) = \begin{cases}
		k+|\bmx|_0, & \text{if }|\bmx|_0 \leq n-k\text{ or } \bmx=0^n,\\
		n-|\bmx|_0, & \text{else},
	\end{cases}\]
	where $k\in \mathbb{Z} \wedge 2\le k<n/2$, and $|\bmx|_1$ and $|\bmx|_0$ denote the number of 1-bits and 0-bits in $\bmx$, respectively.
\end{definition}

\subsection{Evolutionary Algorithms}

The $(\mu+1)$-GA algorithm is a classical genetic algorithm for solving single-objective optimization problems. As presented in Algorithm~\ref{alg:mu}, it starts from an initial population of $\mu$ random solutions (line~1). In each iteration, it selects a solution $\bmx$ from the population $P$ randomly as the parent solution (line~3). Then, with probability $p_c$, it selects another solution $\bmy$ and applies uniform crossover on $\bmx$ and $\bmy$ to generate an offspring solution $\bmx'$ (lines~4--7); otherwise, $\bmx'$ is set as the  copy of $\bmx$ (line~9). The uniform crossover operator exchanges each bit of two solutions independently with probability $1/2$. Note that uniform crossover actually produces two solutions, but we only pick the first one. Afterwards, bit-wise mutation is applied, which flips each bit of a solution independently with probability $1/n$, on $\bmx'$ to generate one offspring solution (line~11). Then, one solution with the worst objective value is removed (lines~12--13).

Now we introduce two well-established MOEAs, NSGA-II~\cite{deb-tec02-nsgaii} and SMS-EMOA~\cite{beume2007sms}. 
The NSGA-II algorithm as presented in Algorithm~\ref{alg:nsgaii} adopts a $(\mu+\mu)$ steady state mode. It starts from an initial population of $\mu$ random solutions (line~1). In each iteration, it selects $\mu$ solutions from the current population to form the parent population $Q$ (line~4). Then, for each pair of the solutions in $Q$, uniform crossover and bit-wise mutation operators are applied sequentially to generate two offspring solutions $\bmx''$ and $\bmy''$ (lines~6--12), where the uniform crossover operator is applied with probability $p_c$.
After $\mu$ offspring solutions have been generated in $P'$, the solutions in $P\cup P'$ are partitioned into non-dominated sets $R_1,\ldots,R_v$ (line~14), where $R_1$ contains all the non-dominated solutions in $P\cup P'$, and $R_i$ ($i\ge 2$) contains all the non-dominated solutions in $(P\cup P') \setminus \cup_{j=1}^{i-1} R_j$.
Then, the solutions in $R_1, R_2,\ldots, R_v$ are added into the next population, until the population size exceeds $\mu$ (lines~15--18). For the critical set $R_i$ whose inclusion makes the population size larger than $\mu$, the crowding distance is computed for each of the contained solutions (line~19).
Crowding distance reflects the level of crowdedness of solutions in the population. For each objective $f_j $, $1\le j\le m$, the solutions in $R_i$ are sorted according to their objective values in ascending order, and assume the sorted list is $\bmx^1,\bmx^2,\ldots, \bmx^k$. Then, the crowding distance of the solution $\bmx^l$ ($1\le l\le k$) with respect to $f_j$  is set to $\infty$ if $l \in \{1, k\} $,

\begin{algorithm}[t!]
	\caption{$(\mu+1)$-GA}
	\label{alg:mu}
	\textbf{Input}: objective function $f$, population size $\mu$, probability $p_c$ of using crossover  \\
	%	\textbf{Parameter}: Optional list of parameters\\
	\textbf{Output}:  $\mu$ solutions from $\{0,1\}^n$
	\begin{algorithmic}[1] %[1] enables line numbers
		\STATE $P\leftarrow \mu$ solutions uniformly and randomly selected from $\{0,\! 1\}^{\!n}$ with replacement;
		\WHILE{criterion is not met}
		\STATE select a  solution $\bmx$ from $P$ uniformly at random;
		\STATE sample $u$ from uniform distribution over $[0, 1]$;
		\IF{$u<p_c$}
		\STATE select a  solution $\bmy$ from $P$ uniformly at random;
		\STATE apply uniform crossover on $\bmx$ and $\bmy$ to generate one solution $\bmx'$
		\ELSE 
		\STATE set $\bmx'$ as the copy of $\bmx$
		\ENDIF
		\STATE apply bit-wise mutation on $\bmx'$ to generate $\bmx''$;
		\STATE let $\bmz=\arg\min_{\bmx\in P\cup \{\bmx''\}}f(\bmx)$;
		\STATE $P\leftarrow (P\cup \{\bmx''\})\setminus \{\bmz\}$
		\ENDWHILE
		\RETURN $P$
	\end{algorithmic}
\end{algorithm}
\begin{algorithm}[ht!]
	\caption{NSGA-II}
	\label{alg:nsgaii}
	\textbf{Input}: objective functions $f_1,f_2\ldots,f_m$, population size $\mu$, probability $p_c$ of using crossover\\
	\textbf{Output}: $\mu$ solutions from $\{0,1\}^n$
	\begin{algorithmic}[1]
		\STATE $P\leftarrow \mu$ solutions uniformly and randomly selected from $\{0,\! 1\}^{\!n}$ with replacement;
		\WHILE{criterion is not met}
		\STATE let $P'=\emptyset$;  
            \STATE generate a parent population $Q$ of size $\mu$;
            \FOR{each pair of the parent solutions $\bmx$ and $\bmy$ in $Q$}
		\STATE sample $u$ from uniform distribution over $[0, 1]$;
		\IF{$u<p_c$}
		\STATE apply uniform crossover on $\bmx$ and $\bmy$ to generate two solutions $\bmx'$ and $\bmy'$
		\ELSE 
		\STATE set $\bmx'$ and $\bmy'$ as copies of $\bmx$ and $\bmy$, respectively
		\ENDIF
		\STATE apply bit-wise mutation on $\bmx'$ and $\bmy'$ to generate $\bmx''$ and $\bmy''$, respectively, and add $\bmx''$ and $\bmy''$ into $P'$
		\ENDFOR
		\STATE partition $P\cup P'$ into non-dominated sets $R_1,\ldots, R_v$;
		\STATE let $P=\emptyset$, $i=1$;
		\WHILE{$|P\cup R_i|<\mu$}
		\STATE $P=P\cup R_i$, $i=i+1$
		\ENDWHILE
		\STATE  assign each solution in $R_i$ with a crowding distance; 
		\STATE sort the solutions in $R_i$ in ascending order by crowding distance, and add the last $\mu-|P|$ solutions  into $P$ 
		\ENDWHILE
		\RETURN $P$
	\end{algorithmic}
\end{algorithm}
 
\noindent and $(f_j(\bmx^{l+1})-f_j(\bmx^{l-1}))/(f_j(\bmx^k)-f_j(\bmx^1))$ otherwise. The final crowding distance of a solution is the sum of the crowding distance with respect to each objective. Finally, the solutions in $R_i$ with the largest crowding distance are selected to fill the remaining population slots  (line~20).

SMS-EMOA shares a similar framework with $(\mu+1)$-GA, and the main difference is that SMS-EMOA uses non-dominated sorting and hypervolume indicator to evaluate the quality of a solution and update the population. Specifically, after the offspring solution $\bmx''$ is generated in line~11 of Algorithm~\ref{alg:mu}, the solutions in $P\cup \{\bmx''\}$ are first partitioned into non-dominated sets $R_1,\ldots,R_v$, and then line~12 changes to ``let $\bmz=\arg\min_{\bmx\in R_v}\Delta_{\bmr}(\bmx,R_v)$'', where $
\Delta_{\bmr}(\bmx, R_v)=HV_{\bmr}(R_v)-HV_{\bmr}(R_v\setminus \{\bmx\})$.
Note that 
$HV_{\bmr}(X)
	=\Lambda
	\big(\cup_{\bmx\in X} 
	\{\bmf'\in \mathbb{R}^m \mid 
	\forall 1\le i\le m: 
	r_i\le f'_i\le f_i(\bmx)\}\big) $
denotes the hypervolume of a solution set $X$ with respect to a reference point $\bmr\in \mathbb{R}^m$ (satisfying $\forall 1\le i\le m, r_i\le \min_{\bmx\in \mathcal{X}}f_i(\bmx)$), i.e., the volume of the objective space between the reference point and the objective vectors of the solution  set, where $\Lambda$ denotes the Lebesgue measure. 
A larger hypervolume implies a better approximation in terms of convergence and diversity.

Note that in this paper, we consider different methods to select the parent solutions in line~4 of Algorithm~\ref{alg:nsgaii}, i.e., fair selection which selects each solution in $P$ once (the order of solutions is random), uniform selection which selects parent solutions from $P$ independently and uniformly at random for $\mu$ times, and binary tournament selection which first picks two solutions randomly from the population $P$ with replacement and then selects a better one for $\mu$ times. 
Furthermore, for all the three algorithms studied in this paper, the probability $p_c$ of using crossover belongs to the interval $[\Omega(1),1-\Omega(1)]$.

\section{Proposed Diversity Maintenance Method}\label{sec-diversity}

In this section, we introduce the proposed method that is used to maintain the diversity of solutions in the population update procedure of $(\mu+1)$-GA, NSGA-II and SMS-EMOA. The general idea of our method is very simple -- we take into account the (Hamming) distance of the solutions into the population update procedure of the algorithm when the solutions have the same objective value, such that the distant solutions in the solution space can be preserved.

First, let us consider $(\mu+1)$-GA. In the original population update procedure of $(\mu+1)$-GA, the solution with the minimal objective value is removed, with the ties broken uniformly. That is, the solutions with the minimal objective value are treated equally, regardless of their structure in the solution space. 
In order to take advantage of the genotype information of the solutions, our method will further consider the diversity of the solutions with the minimal objective value, and preserve the solutions with larger diversity. Specifically, line~12 of Algorithm~\ref{alg:mu} changes as follows (note that $H(\cdot,\cdot)$ denotes the Hamming distance of two solutions): 
\begin{framed}\vspace{-0.5em}
    \begin{algorithmic}
    \STATE let $S=\{\bmz \mid f(\bmz)=\min_{\bmx\in P\cup \{\bmx''\}} f(\bmx)\}$;
    \IF{$|S|\le 2$}
    \STATE let $\bmz$ be a solution randomly selected from $S$
    \ELSE
    \STATE let $(\tilde{\bmx},\tilde{\bmy})=\arg\max_{\bmx,\bmy\in S, \bmx\neq \bmy} H(\bmx,\bmy)$;
    \STATE let $\bmz$ be a solution randomly selected from $S\setminus\! \{\tilde{\bmx}, \tilde{\bmy}\}$
    \ENDIF
\end{algorithmic}\vspace{-0.5em}
\end{framed}\noindent

Now we consider NSGA-II. When computing the crowding distance for the solutions in the critical non-dominated set, i.e., $R_i$ in line~19 of Algorithm~\ref{alg:nsgaii}, the solutions with the same objective vector must be sorted together. Then, the solutions placed in the first or the last position among
these solutions will be assigned a crowding distance larger than 0, while the other solutions will only be with a crowding distance of 0. In this case, the solutions (corresponding to the same objective vector) placed in the boundary positions will be preferred by the measure of crowding distance. In the original NSGA-II, the solutions (corresponding to the same objective vector) are deemed as identical and their relative positions in the calculation of crowding distance are actually assigned randomly, which may harm the efficiency of the algorithm.
%as we will show later.

The main idea of our method is to modify the solutions' position in the crowding distance calculation, so that the distant solutions in the solution space are preferred. Specifically, after the solutions in the critical non-dominated set $R_i$ are sorted according to some objective $f_j$ in ascending order (here assuming that the sorted list is $\bmx^1,\bmx^2,\ldots,\bmx^k$), we reorder the list as follows:
\begin{framed}\vspace{-0.5em}
    \begin{algorithmic}
    \STATE let $G=\{f_j(\bmx^l)\mid l\in [1..k]\}$;
    \FOR{$g \in G$}
    \STATE let $S_g=\{l\in [1..k]\mid f_j(\bmx^l)=g\}$;
    \STATE let $(\tilde{a},\tilde{b})=\arg\max_{a,b\in S_g, a\neq b} H(\bmx^a,\bmx^b)$;
    \STATE let $\hat{l}=\min\{l\in S_g\}$, $\hat{r}=\max\{r\in S_g\}$;
    \STATE swap the positions of $\bmx^{\tilde{a}}$ and $\bmx^{\hat{l}}$;
    \STATE swap the positions of $\bmx^{\tilde{b}}$ and $\bmx^{\hat{r}}$
    \ENDFOR
\end{algorithmic}\vspace{-0.5em}
\end{framed}\noindent
That is, for the solutions with the same objective value, the pair of solutions with the largest Hamming distance are put in the first and last positions among these solutions. The crowding distance is then calculated based on the reordered list.

Finally, let us consider SMS-EMOA. In the original population update procedure of SMS-EMOA, the solution in the last non-dominated set $R_v$ with the least $\Delta$ value is removed. However, when several solutions in $R_v$ have the same objective vector, they will all have a zero $\Delta$ value, implying that one of them will be removed randomly. To make use of their crowdedness in the solution space, we first select the most crowded objective vector (i.e., the objective vector corresponding to the most solutions), and then randomly remove one of those corresponding solutions excluding the two solutions having the largest Hamming distance. Specifically, the line of ``let $\bmz=\arg\min_{\bmx\in R_v}\Delta_{\bmr}(\bmx,R_v)$'' in the original SMS-EMOA will be changed to: 
\begin{framed}\vspace{-0.5em}
    \begin{algorithmic}
    \STATE let $G=\{\bmf(\bmx)\mid \bmx\in R_v\}$;
    \STATE let $\bm{g}^*=\arg\max_{\bm{g}\in G}|\{\bmx\in R_v\mid \bmf(\bmx)=\bm{g}\}|$;
    \STATE let  $S = \{\bmx\in R_v \mid \bmf(\bmx) = \bm{g}^* \}$;
    \IF{$|S| > 2$}
    \STATE let $(\tilde{\bmx},\tilde{\bmy})=\arg\max_{\bmx,\bmy\in S, \bmx\neq \bmy} H(\bmx,\bmy)$;
    \STATE let $\bmz$ be a solution randomly selected from $S\setminus\! \{\!\tilde{\bmx}, \tilde{\bmy}\!\}$;
    \ELSE
    \STATE let $\bmz=\arg\min_{\bmx\in R_v}\Delta_{\bmr}(\bmx,R_v)$
    \ENDIF
\end{algorithmic}\vspace{-0.5em}
\end{framed}\noindent

% Introduce similar approach to SMS-EMOA, refactor the original mechanism that delete one solution with smallest $\Delta_{HV}$ in the set with largest rank $R_v$ after non-dominating sort. In the new mechanism, find all solutions that share the same objective vector first. If there is no such solution, remove one solution with smallest $\Delta_{HV}$. Otherwise, find the objective vector that repeat most, denote the solutions with that objective vector by $D$. Calculate the Hamming distance between them and leave a pair of solutions with largest Hamming distance. Then delete one solution in $D$  except this pair of solutions. If there are only two solutions in $D$, delete one randomly.

\section{$(\mu+1)$-GA on Jump}

In this section, we show the expected running time of $(\mu+1)$-GA in Algorithm~\ref{alg:mu} for solving the Jump problem. Note that the running time of EAs is often measured by the number of ﬁtness evaluations, the most time-consuming step in the evolutionary process. We prove in Theorem~\ref{thm:mu1} that the expected number of fitness evaluations of $(\mu+1)$-GA using the proposed diversity maintenance method for solving Jump is $O(\mu^2 4^k+  \mu n\log n+n\sqrt{k}(\mu\log\mu+\log n) )$. The proof idea is to divide the optimization procedure into three phases, where the first phase aims at driving the population towards the local optimum, i.e., all the solutions in the population have $(n-k)$ 1-bits, the second phase aims at finding two local optimal solutions with the largest Hamming distance $2k$, and the third phase applies uniform crossover to these two distant local optimal solutions to find the global optimum.

\begin{theorem}\label{thm:mu1}
    For $(\mu+1)$-GA solving Jump with $k\le n/4$, if using the diversity maintenance method, and a population size $\mu$ such that $\mu\ge 2$, then the expected number of fitness evaluations for finding the global optimal solution $1^n$ is $O(\mu^2 4^k+\mu n\log n + n\sqrt{k}(\mu\log\mu+\log n))$.
\end{theorem}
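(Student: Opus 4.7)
The plan is to follow the three-phase decomposition suggested by the statement. Phase~1 drives the population onto the plateau of local optima, meaning that every individual $\bmx\in P$ has $|\bmx|_1=n-k$ and hence fitness $n$. Phase~2 increases the intra-population diameter $D_t:=\max_{\bmx,\bmy\in P_t}H(\bmx,\bmy)$ up to $2k$, i.e., the population comes to contain two antipodal local optima. Phase~3 converts that antipodal pair into $1^n$ by one lucky crossover. For Phase~1 I would replay the standard $(\mu+1)$-GA-on-Jump argument in the spirit of \cite{doerr2023crossover}: a fitness-level / multiplicative-drift argument brings the best-so-far fitness to $n$ in expected $O(\mu n\log n)$ steps; a concentration bound on the number of bits flipped in one mutation yields the $\sqrt{k}$-factor that controls the probability of accidentally crossing the fitness valley; and a takeover-time argument spreads a single local optimum across the whole population in an extra $O(n\sqrt{k}(\mu\log\mu+\log n))$ iterations. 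This recovers the two non-$4^k$ terms of the stated bound.

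The novelty of the proof sits in Phase~2. The key structural observation is that once every individual in $P$ has fitness $n$, any offspring $\bmx''$ that is again a local optimum yields $|S|=\mu+1>2$ in the modified line~12, so the rule always protects the pair in $S$ achieving the largest Hamming distance. Consequently $D_t$ is monotone non-decreasing along the run. To lower-bound its drift, suppose $D_t=2(k-j)$ for some $j\ge 1$, witnessed by $\bmx^\star,\bmy^\star$ whose zero-sets $A,B\subseteq[1..n]$ satisfy $|A\cap B|=j$. Consider the event ``pick $\bmy^\star$ as sole parent (probability $1/\mu$), skip crossover (probability $1-p_c=\Omega(1)$), and by bit-wise mutation flip exactly one bit in $A\cap B$ and one bit in $[1..n]\setminus(A\cup B)$ and nothing else (probability $\Theta(j(n-2k+j)/n^{2})$).'' The resulting offspring is a local optimum at Hamming distance $D_t+2$ from $\bmx^\star$, so $D_t$ strictly increases. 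With $k\le n/4$ we have $n-2k+j\ge n/2$, whence the per-iteration success probability is $\Omega(j/(\mu n))$; summing the waiting times $O(\mu n/j)$ over the stages $j=k,k-1,\ldots,1$ yields total expected duration $O(\mu n\log k)\subseteq O(\mu n\log n)$.

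Phase~3 is now short. With $D_t=2k$ the population contains two local optima $\bmx^\star,\bmy^\star$ with disjoint zero-sets, and the diversity rule preserves some such pair until $1^n$ is produced. In a single iteration, choosing $\bmx^\star,\bmy^\star$ as the ordered parent pair has probability $1/\mu^{2}$; uniform crossover then outputs $1^{n}$ iff at each of the $2k$ disagreeing positions the $1$-bit is inherited, which happens with probability $2^{-2k}=4^{-k}$; and bit-wise mutation leaves that offspring intact with probability $(1-1/n)^{n}=\Omega(1)$. Multiplying gives an $\Omega(p_c/(\mu^{2}4^{k}))=\Omega(1/(\mu^{2}4^{k}))$ success probability per step, so the expected Phase~3 duration is $O(\mu^{2}4^{k})$; summing the three phase bounds proves the theorem. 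The main obstacle I anticipate is the rigorous bookkeeping in Phase~2: the max-distance pair in $P\cup\{\bmx''\}$ may involve $\bmx''$ and eject one of the previous extremal individuals, so one must argue carefully that some pair of distance $\ge D_t$ always survives the deletion step (securing monotonicity) and that the ``expected waiting time over disjoint stages'' argument can be stitched across these potential identity changes. Phase~1 additionally requires the non-routine handling of the fitness valley $|\bmx|_1\in[n-k+1,n-1]$, which is where the slightly unusual $\sqrt{k}$-factor in the bound originates.
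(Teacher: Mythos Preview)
Your three-phase decomposition and the analysis of each phase match the paper's proof essentially exactly; the only cosmetic differences are that you parameterize Phase~2 by $j=|A\cap B|$ (the number of shared zero-positions) whereas the paper uses $j=D_t/2$, and you attribute the $O(\mu n\log n)$ term to Phase~1 whereas the paper obtains it in Phase~2 (and simply cites Lemma~1 of \cite{dang2018diversity} for Phase~1). The bookkeeping concern you flag at the end is resolved exactly as you suspect: since the diversity rule always protects \emph{some} max-distance pair in $P\cup\{\bmx''\}$, the diameter $D_t$ is monotone, and the per-step success probability you compute depends only on the current value of $D_t$ (not on the identity of the witnessing pair), so the stage-wise waiting-time sum goes through without further work.
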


\begin{proof}
We divide the optimization procedure into three phases. The first phase starts after initialization and finishes until all the solutions in the population have $(n-k)$ 1-bits; the second phase starts after the first phase, and finishes until the largest Hamming distance between solutions in the population reaches $2k$; the third phase starts after the second phase, and finishes when the global optimal solution $1^n$ is found. By Lemma~1 in~\cite{dang2018diversity}, we can directly derive that the expected number of fitness evaluations of the first phase is $O(n\sqrt{k}(\mu\log\mu+\log n))$. Note that after the first phase finishes, all the solutions in the population will always have $(n-k)$ 1-bits (except that $1^n$ is found), because once a solution with the number of 1-bits in $[0..n-k-1]\cup [n-k+1..n-1]$ is generated, it will be removed in the population update procedure.

%Next, we will prove the theorem in three claims. The first claim proves that the modified survivor selection mechanism ensures diversity among the population solutions. The second claim analyzes the time required for increasing diversity in the population solutions, and the third claim analyzes the time needed to find the global optimum by taking advantage of population diversity and crossover.

Next, we consider the second phase. Let $J_{\max} = \max_{\bmx,\bmy \in P} H(\bmx,\bmy)$ denote the maximal Hamming distance between two solutions in the population $P$, where $H(\cdot,\cdot)$ denotes the Hamming distance of two solutions. We show that $J_{\max}$ will not decrease, unless $1^n$ is found. Assume that $\bmx$ and $\bmy$ are a pair of solutions which have the maximal Hamming distance. In each iteration, $\bmx$ or $\bmy$ can be removed only if the newly generated solution $\bmz$ has exactly $(n-k)$ 1-bits and survives in the population update procedure (note that we can pessimistically assume that $1^n$ has not been found), implying that all the solutions in $P\cup\{\bmz\}$  have $(n-k)$ 1-bits. According to the diversity maintenance method introduced in Section~\ref{sec-diversity}, there must exist a pair of solutions in $P\cup\{\bmz\}$ with Hamming distance at least $J_{\max}$, which will not be removed, implying that $J_{\max}$ cannot decrease. 

We then show that $J_{\max}$ can increase to $2k$ in $O(\mu n\log n)$ expected number of fitness evaluations. Assume that currently $J_{\max}=2j<2k$, and $(\bmx,\bmy)$ is a pair of corresponding solutions, i.e., $H(\bmx,\bmy)=2j$. Let $D(\bmx,\bmy)=\{i\in [1..n]\mid x_i=y_i \}$ denote the set of positions which have the identical bit for $\bmx$ and $\bmy$. Suppose that the number of positions in $D(\bmx,\bmy)$ that have 1-bits is $l$, then we have $|\bmx| + |\bmy| = 2l + 2j = 2n-2k$, implying that $l = n-k-j$. Then, we can derive that the number of positions in $D(\bmx,\bmy)$ that have 0-bits is $n-2j-l=k-j$. In each iteration, the probability of selecting $\bmx$ as a parent solution is $1/\mu$. In the reproduction procedure, a solution $\bmz$ with $(n-k)$ 1-bits and $H(\bmy,\bmz)=2j+2$ can be generated from $\bmx$ if crossover is not performed (whose probability is $1-p_c$), and a 1-bit together with a 0-bit from the positions in $D(\bmx,\bmy)$ are flipped by bit-wise mutation (whose probability is $((n-k-j)(k-j)/n^2)\cdot (1-1/n)^{n-2}$). Thus, the probability of generating $\bmz$ is at least 
\begin{equation}\label{eq:mu1}
\begin{aligned}
&\frac{1}{\mu} \cdot (1-p_c) \cdot \frac{(n-k-j)(k-j)}{n^2}\cdot \Big(1-\frac{1}{n}\Big)^{n-2}\\
&\ge  (1-p_c)(k-j)(n-k-j)/(e \mu n^2).
\end{aligned}
\end{equation}
This implies that the expected number of fitness evaluations for increasing $J_{\max}$ to $2k$ is at most
\begin{equation}\label{eq:mutation_for_H_2k}
\begin{aligned}
& \frac{e \mu n^2}{1-p_c} \cdot\sum^{k-1}_{j=0}\frac{1}{(k\!-\!j)(n\!-\!k\!-\!j)}\le  \frac{2e\mu nH_k}{1-p_c} =O(\mu n\log n),
% &\le  \frac{e\mu}{ (1-p_c)} \cdot n \cdot 2\log \sqrt{n}=O(n \log n).
\end{aligned}
\end{equation}
where the inequality holds because $n-k-j\ge n-2k+1\ge n/2$ for $j \le  k-1$ and $k \le n/4$, and the equality holds because $p_c\in [\Omega(1),1-\Omega(1)]$, and the $k$-th Harmonic number $H_k=\sum^k_{j=1} 1/j=O(\log n)$.

Finally, we consider the third phase. When $J_{\max}$ increases to $2k$, there must exist two solutions $\bmx^*$ and $\bmy^*$ such that $|\bmx^*|_1=|\bmy^*|_1=n-k$ and $H(\bmx^*,\bmy^*)=2k$. By selecting $\bmx^*$ and $\bmy^*$ as a pair of parent solutions, exchanging all the $2k$ different bits by uniform crossover, and flipping none of bits in bit-wise mutation, the solution $1^n$ can be generated, whose probability is at least 
\begin{equation}\label{eq:mu3}
    \frac{1}{\mu^2}\cdot p_c\cdot \frac{1}{2^{2k}}\cdot \Big(1-\frac{1}{n}\Big)^{n}=\Omega\Big(\frac{1}{\mu^2 2^{2k}}\Big).
\end{equation}
Thus, the expected number of fitness evaluations of the third phase, i.e., finding $1^n$, is $O(\mu^2 2^{2k})$.

Combining the three phases, the total expected number of fitness evaluations is  $O(n\sqrt{k}(\mu\log\mu+\log n))+O(\mu n\log n)+O(\mu^2 2^{2k})$, which leads to the theorem.	
\end{proof}

The expected number of fitness evaluations of the original $(\mu+1)$-GA for solving Jump has been shown to be $O(\mu \sqrt{k}  (40e^2 \mu n/k)^k)$ when $k = o(\sqrt{n})$~\cite{doerr2023crossover}, and $O(\mu n \log \mu+ n^k/\mu + n^{k-1} \log \mu)$ when $k$ is a constant~\cite{dang2018diversity}. Thus, our result in Theorem~\ref{thm:mu1} shows that when $k$ is large, e.g., $k=n^{1/4}$, the expected running time can be reduced by a factor of $\Omega((10e^2\mu n/k)^{k}/(\mu n\log n))$, which is exponential; when $k$ is a constant, the expected running time can be reduced by a factor of $\Omega( \max(1,n^{k-1}/(\mu^2\log n)))$, which is polynomial. The main reason for the acceleration is that the proposed method prefers solutions with large Hamming distance, making the population quickly find two solutions with sufficiently large Hamming distance and then allowing crossover to generate the optimal solution.

% \cite{dang2016escaping} proposed several types of diversity maintenance mechanisms for $(\mu+1)$-GA solving the jump problem, including mechanisms similar to ours, i.e., diversity with distance. However, there are some differences. For example, the total Hamming distance mechanism maximizes the sum of the Hamming distances between all individuals, while our approach only preserves the two individuals with the maximum Hamming distance. 
% And the result are also different. Specifically, when $k$ and $\mu$ are fairly large (e.g., $k=\sqrt{n}$ and $\mu=\sqrt{n}/3$), the  expected runtime of $(\mu+1)$-GA using the total Hamming distance mechanism and our proposed mechanism is
% $O(4^{\sqrt{n}})$ and $O(n 4^{\sqrt{n}})$, respectively, implying that the total Hamming distance mechanism is faster by a factor of $\Theta(n)$. 
% For small $k$ and large $\mu$ (e.g., $k=\Theta(1)$ and $\mu=\Theta(n)$), however, our proposed mechanism achieves a better runtime $O(n^2\log n)$, compared to $O(n^3\log n)$ for the total Hamming distance mechanism.
% Similar observation also holds for the other mechanisms in \cite{dang2016escaping}.

Note that Dang \textit{et al}.~\shortcite{dang2016escaping} also proposed several types of diversity maintenance mechanisms for $(\mu+1)$-GA solving the Jump problem, including mechanisms similar to ours, e.g., total Hamming distance mechanism which maximizes the sum of the Hamming distances between all solutions. Based on the running time bounds, their mechanisms and our proposed mechanism have own advantages. For example, when $k$ and $\mu$ are fairly large (e.g., $k=\sqrt{n}$ and $\mu=\sqrt{n}/3$), the expected number of fitness evaluations of $(\mu+1)$-GA using the total Hamming distance mechanism and our proposed mechanism is $O(4^{\sqrt{n}})$ and $O(n 4^{\sqrt{n}})$, respectively, implying that the total Hamming distance mechanism is faster by a factor of $\Theta(n)$. For small $k$ and large $\mu$ (e.g., $k=\Theta(1)$ and $\mu=\Theta(n)$), however, our proposed mechanism achieves a better running time $O(n^2\log n)$, compared to $O(n^3\log n)$ for the total Hamming distance mechanism. Similar observation also holds for the other mechanisms.

\section{NSGA-II on \ojzj}

In the previous section, we have shown that the proposed method can make $(\mu+1)$-GA faster on Jump, while in this section, we show that similar acceleration can be achieved in multi-objective scenario. In particular, we prove in Theorem~\ref{thm:nsga1} that the expected number of fitness evaluations of NSGA-II using the proposed diversity maintenance method for solving \ojzj\ is $O(\mu^2 4^{k} + \mu n\log n)$. The proof idea is to divide the optimization procedure into three phases: 1) to find the inner part $F_I^*=\{(a,n+2k-a)\mid a\in [2k..n]\}$ of the Pareto front; 2) to find two Pareto optimal solutions with $(n-k)$ (or $k$) 1-bits that have the same objective vector $(n,2k)$ (or $(2k,n)$) but have Hamming distance $2k$; 3) to find the remaining two extreme vectors in the Pareto front, i.e., $\{(k, n+k), (n+k, k)\}$, corresponding to the two Pareto optimal solutions $0^n$ and $1^n$, which can be accomplished by applying uniform crossover to those two distant solutions found in the last phase.

% \begin{theorem}\label{thm:nsga-new-ojzj}
%     Consider the NSGA-II algorithm optimizing the $\ojzj_{n,k}$ benchmark for $k=o(\sqrt{n})$, with population size $\mu \ge 4(n-2k+3)$. With fair selection method, the uniform selection method or $\mu$ independent binary tournaments method, applying uniform crossover with probability $p_c$ and bit-wise mutation,  then in expectation, the algorithm needs another $O(\mu^2 2^{k}+ n\log n)$ fitness evaluations to cover the whole Pareto front.
% \end{theorem}

\begin{theorem}\label{thm:nsga1}
    For NSGA-II solving \ojzj\ with $k\le n/4$, if using the diversity maintenance method, and a population size $\mu$ such that $\mu\ge 4(n-2k+3)$, then the expected number of fitness evaluations for finding the Pareto front is $O(\mu^2 4^k+\mu n \log n)$.
\end{theorem}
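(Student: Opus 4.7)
The plan is to mirror the three-phase decomposition sketched just before the statement. Phase~1 drives the population onto the inner Pareto front $F_I^*=\{(a, n+2k-a)\mid a\in[2k..n]\}$. The hypothesis $\mu\ge 4(n-2k+3)$ leaves enough room to store at least a constant number of copies at every inner vector, so once a vector is discovered, non-dominated sorting keeps it in the first rank, and the diversity mechanism (which only permutes solutions inside groups sharing an objective vector) never evicts a previously covered vector. A standard fitness-level / typical-run argument, analogous to the one used by Doerr and Qu for the original NSGA-II on \ojzj, then gives an $O(\mu n\log n)$ bound for completing $F_I^*$.

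In Phase~2 the goal is to raise the maximum Hamming distance among solutions representing the objective vector $(n,2k)$ (solutions with exactly $n-k$ ones) and symmetrically $(2k,n)$ to $2k$. Because $\mu\ge 4(n-2k+3)$, once $F_I^*$ is covered, $\Omega(\mu)$ population slots can be filled with copies of $(n,2k)$; any new offspring that is not on $F_I^*$ or is a duplicate whose vector is less underrepresented will be discarded in the crowding-distance comparison. The central invariant to verify is that the maximum Hamming distance $J_{\max}$ in the $(n,2k)$-group is non-decreasing: the reordering step of Section~3 places the diameter pair $(\tilde{\bmx},\tilde{\bmy})$ into the boundary positions of the sorted list within the group, so both receive strictly positive crowding distance on both objective axes while the interior copies receive $0$; hence $\tilde{\bmx}$ and $\tilde{\bmy}$ survive the last truncation step. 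Given this invariant, a mutation calculation analogous to Equation~(2) of the proof of Theorem~1 (flip one $1$-bit together with one $0$-bit among the $2(n-k-j)$ shared ones and $2(k-j)$ shared zeros of a diameter pair $(\bmx,\bmy)$ with $H(\bmx,\bmy)=2j$) shows that $J_{\max}$ grows from $2j$ to $2j+2$ with probability $\Omega((k-j)(n-k-j)/(\mu n^2))$ per evaluation, summing to $O(\mu n\log n)$ for Phase~2.

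Phase~3 is essentially a reprise of Phase~3 of Theorem~1. Once a pair $\bmx^*,\bmy^*$ with $|\bmx^*|_1=|\bmy^*|_1=n-k$ and $H(\bmx^*,\bmy^*)=2k$ is in the population, the probability of selecting them as a parent pair, running uniform crossover that exchanges all $2k$ differing bits, and skipping mutation is $\Omega(1/(\mu^2 2^{2k}))$, producing $1^n$; the symmetric event produces $0^n$. Summing the phases yields the bound $O(\mu^2 4^k+\mu n\log n)$ claimed.

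The hard part is the invariance step in Phase~2. In $(\mu+1)$-GA the only comparison is the single fitness value, so preserving the diameter pair is immediate from the rewritten replacement rule. In NSGA-II the crowding distance operates per axis and solutions with identical objective vectors sit at the same coordinate on both axes, so I must argue that the reordering indeed assigns the diameter pair a strictly larger crowding distance than the other copies of $(n,2k)$, that this remains true when new offspring land on the same vector (ties broken by the reordering itself), and that when the critical rank is trimmed to fit $\mu$, neither $\tilde{\bmx}$ nor $\tilde{\bmy}$ is ever among the discarded solutions. A careful case analysis of how many solutions share vector $(n,2k)$, together with verifying that the capacity $\mu\ge 4(n-2k+3)$ suffices to avoid expelling any currently covered vector, is the technical crux; the rest of the theorem follows the blueprint of Theorem~1 with minor bookkeeping.
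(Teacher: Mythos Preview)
Your three-phase decomposition and the invariance idea in Phase~2 match the paper's proof, and for uniform or fair parent selection your sketch goes through with only routine adjustments. There is, however, a genuine gap: the theorem is meant to hold for all three parent-selection rules considered in the paper (fair, uniform, and binary tournament), and your proposal tacitly assumes uniform selection. The $\Omega(1/(\mu^2 2^{2k}))$ probability you quote for selecting the diameter pair $(\bmx^*,\bmy^*)$ in Phase~3 is exactly the uniform-selection bound; under binary tournament it does not follow automatically.

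The difficulty is that the reordering step gives only one of the two diameter endpoints infinite crowding distance (the one placed at the outermost position of the $f_1$-sorted list). The other endpoint, say $\bmy^*$, sits at an interior boundary of the $(n,2k)$-block and receives a finite crowding distance of order $1/(f_1^{\max}-f_1^{\min})$. To argue that $\bmy^*$ still wins a tournament with probability $\Omega(1/\mu)$, the paper has to bound how many solutions in $R_1$ have strictly larger crowding distance: it sums all finite crowding-distance contributions (at most $4$ in total across both objectives), uses $k\le n/4$ to show that at most $3\mu/4+4$ solutions can beat $\bmy^*$, and only then concludes that $\bmy^*$ is chosen with probability $\Omega(1/\mu)$. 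This is not ``minor bookkeeping'' and your proposal does not contain it. A similar (but easier) issue arises in Phase~2 under binary tournament, where you need that a diameter-witnessing parent is selected with probability $\Omega(1/\mu)$; the paper handles this by observing that the reordering places one endpoint at the global end of the $f_1$-list, giving it infinite crowding distance. Finally, your claim that $\Omega(\mu)$ slots get filled with copies of $(n,2k)$ is neither needed nor obviously true; the paper's invariance argument is cleaner: at most $4(n-2k+3)\le\mu$ solutions in $R_1$ can have positive crowding distance, so all of them (including the diameter pair) survive.
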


\begin{proof}
We divide the optimization procedure into three phases: the first phase starts after initialization and finishes until the inner part $F_I^*$ of the Pareto front is entirely covered by the population $P$; the second phase starts after the first phase and finishes until the maximal Hamming distance of the solutions which have the objective vector $(n,2k)$ (i.e., have $(n-k)$ 1-bits) reaches $2k$; the third phase starts after the second phase and finishes when the extreme Pareto optimal solution $1^n$ is found. Note that the analysis for finding $0^n$ is similar.
% Note that the analysis of the last two phases for finding the other extreme Pareto optimal solution $0^n$ is similar. 
By Lemma~4 in~\cite{doerr2023ojzj}, we can directly derive that the expected number of fitness evaluations of the first phase is $O(\mu n\log n)$.

Then, we consider the second phase. Let $J_{\max} = \max_{\bmx,\bmy \in P: \bmf(\bmx) = \bmf(\bmy) = (n,2k)} H(\bmx,\bmy)$ denote the maximal Hamming distance of the solutions which have the objective vector $(n,2k)$. We will show that $J_{\max}$ will not decrease after using the proposed diversity maintenance method. Because $(n,2k)$ is a point in the Pareto front, any corresponding solution (which has $(n-k)$ 1-bits) must have rank $1$, i.e., belong to $R_1$ in the non-dominated sorting procedure. 
If $|R_1|\le \mu$, all the solutions in $R_1$ will be maintained in the next population, implying the claim holds. If $|R_1|> \mu$, the crowding distance of the solutions in $R_1$ needs to be computed. 
Since $(n,2k)$ has been obtained, the solutions in $R_1$ must be Pareto optimal. Note that the size of the Pareto front is $(n-2k+3)$, thus the number of different objective vectors of the solutions in $R_1$ is at most $(n-2k+3)$. For any objective vector, the corresponding solutions will have crowding distance larger than $0$ only if they are put in the first or the last position among these solutions, when they are sorted according to some objective. By using our proposed method, the solutions corresponding to the objective vector $(n,2k)$ will be resorted, and a pair of solutions (denoted as $\bmx$ and $\bmy$) with the largest Hamming distance will be put in the first or the last position among these solutions, thus having a crowding distance larger than $0$. As \ojzj\ has two objectives, at most four solutions corresponding to each objective vector can have a crowding distance larger than 0, implying that at most $4(n-2k+3)$ solutions in $R_1$ can have a positive crowding distance. Thus, $\bmx$ and $\bmy$ are among the best $4(n-2k+3)$ solutions in $R_1$. As the population size $\mu\ge 4(n-2k+3)$, they must be included in the next population, implying that $J_{\max}$  will not decrease.

Next, we show that $J_{\max}$ can increase to $2k$ in $O(\mu n \log n)$ expected number of fitness evaluations. The proof is similar to the argument in the proof of Theorem~\ref{thm:mu1}, and the main difference is that NSGA-II uses three different parent selection strategies, and produces $\mu$ solutions instead of one in each generation.
For NSGA-II using uniform selection, Eq.~\eqref{eq:mu1} also holds here. But since in each generation, $\mu/2$ pairs of parent solutions will be selected for reproduction, the probability of increasing $J_{\max}$ is at least 
\begin{equation}
    \begin{aligned}
        &1-\left(1-\frac{(1-p_c)(k-j)(n-k-j)}{e \mu n^2}\right)^{\mu/2}\\
        &\ge  1-e^{-\frac{(1-p_c)(k-j)(n-k-j)}{2e n^2}}
        = \Omega\left(\frac{(k\!-\!j)(n\!-\!k\!-\!j)}{n^2}\right),
        %\ge 1-e^{-N/(4enN)}\ge 1-e^{-1/(2e)} =\Omega(1),
    \end{aligned}
\end{equation}
which hold by $1+a\le e^a$ for any $a\in \mathbb{R}$, and $p_c \in [\Omega(1),1-\Omega(1)]$. For fair selection, each solution in the current population will be selected once; thus by Eq.~\eqref{eq:mu1}, the probability of increasing $J_{\max}$ in each generation is at least 
$(1-p_c)(k-j)(n-k-j)/(e n^2)=\Omega((k-j)(n-k-j)/n^2)$.
% $$
% \frac{(1-p_c)(k-j)(n-k-j)}{e n^2}=\Omega\left(\frac{(k-j)(n-k-j)}{n^2}\right).
% $$
For binary tournament selection, let $S=\{\bmx\in P \mid \bmf(\bmx)=(n,2k)\wedge \exists \bmy\in P, \bmf(\bmy)=(n,2k) \text{ s.t. } H(\bmx, \bmy)= 2j\}$. Then, any solution in $S$ is Pareto optimal and thus will have rank $1$ (i.e., belong to $R_1$) in the non-dominated sorting procedure. When the solutions in $R_1$ are sorted according to $f_1$ in the crowding distance procedure, all the solutions in $S$ will be put in the end of the sorted list because they have the maximal $f_1$ value (note that we pessimistically assume that $1^n$ has not been found). By the proposed diversity maintenance method, one solution in $S$ (denoted as $\bmx^*$)  will be moved to the last position of the sorted list, and thus has an infinite crowding distance. 
    In the binary tournament selection procedure, $\bmx^*$ can be picked with probability $1/\mu$; then it will always win, if the other solution selected for competition has larger rank or finite crowding distance, or win with probability $1/2$, if the other solution has the same rank and crowding distance. Thus, $\bmx^*$ can be selected as the parent solution with probability at least $1/(2\mu)$. Then, similar to the analysis for uniform selection,  the probability of increasing $J_{\max}$ in each generation is at least 
    \begin{equation}
    \begin{aligned}
        &1-\left(1- \frac{1}{2 \mu} \cdot \frac{(1-p_c)(k-j)(n-k-j)}{e n^2}\right)^{\mu/2}\\
        &= \Omega\left(\frac{(k\!-\!j)(n\!-\!k\!-\!j)}{n^2}\right).
    \end{aligned}
\end{equation}
% Then, similar to Eq.~\eqref{eq:mutation_for_H_2k}, we can derive that the total expected number of fitness evaluations for increasing $J_{\max}$ to $2k$ is at most $O(\mu n\log n)$.
Then, similar to Eq.~\eqref{eq:mutation_for_H_2k}, we can derive that the total expected number of generations for increasing $J_{\max}$ to $2k$ is at most $O(n\log n)$.

Finally, we consider the third phase. For uniform selection, Eq.~\eqref{eq:mu3} directly applies here. Thus, the probability of finding $1^n$ in each generation is at least 
$
1-(1-\Omega(1/(\mu^2 2^{2k})))^{\mu/2}=\Omega(1/(\mu 4^{k})).
$
% $$
% 1-\Big(1-\Omega\Big( \frac{1}{\mu^2 2^{2k}} \Big) \Big)^{ \mu/2}=\Omega\Big(\frac{1}{\mu 4^k}\Big).
% $$
For fair selection, every individual in population $P$ is selected as parent, and the probability that two solutions are paired together in each generation is at least $1/\mu$.
% , because we can assume that the position of one solution is fixed and it is sufficient to put the other solution beside it. 
Thus by Eq.~\eqref{eq:mu3}, the probability of finding $1^n$ in each generation is at least $\Omega(1/(\mu 4^k))$. 

Now we consider binary tournament selection.
    Let $f_1^{\max}:=\max_{\bmx\in R_1}f_1(\bmx)$ and $f_1^{\min}:=\min_{\bmx\in R_1}f_1(\bmx)$ denote the maximal and minimal $f_1$ values of the solutions in $R_1$, respectively. 
    Since the solutions with the number of 1-bits in $[1..k-1]\cup [n-k+1..n-1]$ are dominated by $\bmx^*$, the solutions in $R_1$ must be Pareto optimal. Then, we have $f_2^{\max}:=\max_{\bmx\in R_1}f_2(\bmx)=n+2k-f_1^{\min}$, and $f_2^{\min}:=\min_{\bmx\in R_1}f_2(\bmx)=n+2k-f_1^{\max}$, implying that $f_2^{\max}-f_2^{\min}=f_1^{\max}-f_1^{\min}$. Furthermore, $f^{\max}_1 - f^{\min}_1 \leq n-k$.
    
    Since the second phase has finished, there exists a pair of solutions $(\bmx,\bmy)$ in $P$ such that $\bmf(\bmx)=\bmf(\bmy)=(n,2k)$ and $H(\bmx,\bmy)=2k$. By the procedure of the proposed diversity maintenance method, one of the pairs of the solutions will be swapped to the first and the last positions among the solutions with objective vector $(n,2k)$ when computing the crowding distance. Suppose $(\bmx^*,\bmy^*)$ is such pair of solutions and $\bmx^*$ is put in the last position, then $\bmx^*$ will have infinite crowding distance and $\bmy^*$ will have crowding distance at least $1/(f_1^{\max}-f_1^{\min})$. As we analyzed before, the probability of selecting $\bmx^*$ as a parent solution is at least $1/(2\mu)$. Next, we will show that $\bmy^*$ can be selected as a parent solution with probability at least $\Omega(1/\mu)$.

    When the solutions in $R_1$ are sorted according to $f_1$ in ascending order, we assume that the sorted list is $\bmx^1,\bmx^2,\ldots,\bmx^l$. Then, we have $Dist_1(\bmx^1)=Dist_1(\bmx^l)=\infty$, and $\sum_{i=2}^{l-1}Dist_1(\bmx^i)\le 2(f_1(\bmx^l)-f_1(\bmx^1))/(f_1^{\max}-f_1^{\min})=2$, where $Dist_1(\cdot)$ denotes the crowding distance of a solution with respect to $f_1$. Similarly, by assuming that the sorted list is $\bmy^1,\bmy^2,\ldots,\bmy^l$ when the solutions in $R_1$ are sorted according to $f_2$, we have $\sum_{i=2}^{l-1}Dist_2(\bmy^i)\le 2$. 

    Since the total crowding distance of a solution is the sum of the crowding distance with respect to each objective, there exist at most four solutions in $R_1$ whose crowding distance is infinite. Let $S$ denote the set of solutions in $R_1$ with finite crowding distance; then any solution in $S$ can only have crowding distance $i/(f_1^{\max}-f_1^{\min})$ for $i \geq 1$, because $f^{\max}_1 - f^{\min}_1 = f^{\max}_2 - f^{\min}_2$. If at least $3\mu/4\ge 3(n-2k+3)$ solutions in $S$ have crowding distance at least $2/(f_1^{\max}-f_1^{\min})$, then we have $\sum_{\bmx\in S}Dist_1(\bmx)+Dist_2(\bmx)\ge 6(n-2k+3)/(f_1^{\max}-f_1^{\min})\ge 6(n-2k+3)/(n-k)> 4$, leading to a contradiction, where the second inequality holds by $f^{\max}_1-f^{\min}_1\leq n-k$ (note that we pessimistically assume that $1^n$ has not been found), and the last inequality holds by the condition $k\le n/4$ in the theorem.
    Thus, at most $3\mu/4+4$ solutions in $R_1$ has larger crowding distance than $\bmy^*$ , and for the other $(\mu/4-4)$ solutions, $\bmy$ wins the tournament with probability at least $1/2$, implying that $\bmy^*$ can be selected as a parent solution with probability at least $(1/\mu) \cdot (\mu/4-4)/\mu \cdot (1/2) =\Omega(1/\mu)$. 

    Then, by following the analysis of Eq.~\eqref{eq:mu3}, the probability of finding $1^n$ in each generation is at least
    \[
1-\left(1-\Omega\left(\frac{1}{\mu^2 2^{2k}}\right)\right)^{\mu/2}=\Omega\left(\frac{1}{\mu 4^{k}}\right).
\]
    Thus, the expected number of generations for finding $1^n$ is $O(\mu 4^k)$. 
    
% Combining the three phases, the total expected number of fitness evaluations is $O(\mu^2 4^k + \mu n\log n)$. Thus, the theorem holds.
Combining the three phases, the total expected number of generation is $O(\mu 4^k + n\log n) $, implying that the expected number of fitness evaluations is $O( \mu^2 4^k + \mu n\log n)$, because each generation of NSGA-II requires to evaluate $\mu$ offspring solutions. Thus, the theorem holds.
\end{proof}

The expected number of fitness evaluations of the original NSGA-II for solving \ojzj\ has been shown to be $ O(\mu^2 \sqrt{k}(Cn/k)^k)$ if $k=o(\sqrt{n})$~\cite{doerr2023crossover}, where $C$ is a constant.
Thus, our result in Theorem~\ref{thm:nsga1} shows that the expected running time can be reduced by a factor of $\Omega(\sqrt{k}(Cn/(4k))^{k}/\log n)$, which is polynomial for a constant $k$ and exponential for large $k$, e.g., $k=n^{1/4}$. The main reason for the acceleration is similar to that of $(\mu+1)$-GA, i.e., the proposed diversity maintenance method prefers distant solutions in the solution space, and thus enhances the exploration ability of the algorithm by working with the crossover operator.

\section{SMS-EMOA on \ojzj}

In this section, we consider SMS-EMOA on the \ojzj\ problem. We prove in Theorem~\ref{thm:sms1} that the expected number of fitness evaluations of the original SMS-EMOA is $O(\mu^2 \sqrt{k}(Cn/k)^k)$, where $C$ is a constant, and then prove in Theorem~\ref{thm:sms2} that the running time reduces to $O(\mu^2 4^k + \mu n\log n)$ if using the proposed diversity maintenance method. Their proofs are similar to that of Theorem~\ref{thm:nsga1}. 
That is, we divide the optimization procedure into three phases, where the first phase aims at finding the inner part $F_I^*$ of the Pareto front, the second phase aims at finding two Pareto optimal solutions with $(n-k)$ (or $k$) 1-bits that have Hamming distance $2k$, and the last phase aims at finding the remaining two extreme Pareto optimal solutions $1^n$ and $0^n$ by applying uniform crossover to those two distant solutions found in the last phase. 
\begin{theorem}\label{thm:sms1}
For SMS-EMOA solving \ojzj\ with $k = o(\sqrt{n})$, if using a population size $\mu$ such that $\mu\ge 2(n-2k+3)$, then the expected number of fitness evaluations for finding the Pareto front is $O(\mu^2 \sqrt{k}(Cn/k)^k)$, where $C$ is a constant.
\end{theorem}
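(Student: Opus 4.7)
The plan is to mirror the three-phase decomposition used for NSGA-II in Theorem~\ref{thm:nsga1}, but without relying on the diversity maintenance step; the absence of that step forces the Phase~3 bound to be derived by averaging the crossover success probability over the random Hamming distance of co-located parents, which is what produces the refined $(Cn/k)^k$ factor in place of $4^k$.

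First I would handle Phase~1, showing that the inner Pareto front $F_I^* = \{(a,n+2k-a) \mid a \in [2k..n]\}$ is covered in $O(\mu n \log n)$ expected fitness evaluations. The key property is that in SMS-EMOA any Pareto-optimal objective vector, once represented in $P$, cannot subsequently be lost: removing the last copy of such a vector strictly decreases the hypervolume, so it cannot be the $\arg\min$ of $\Delta_{\bmr}$ within its nondominated layer. Spreading along $F_I^*$ then proceeds by single-bit mutations acting on the current extremal endpoints, yielding the bound via a fitness-level argument over the number of distinct inner Pareto vectors covered, analogous to the corresponding lemma in the original NSGA-II analysis.

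For Phase~2, I would use $\mu \ge 2(n-2k+3)$ and a pigeonhole argument to guarantee that, once $F_I^*$ is covered, the population must hold multiple copies at some Pareto vectors, and in particular at the extreme inner vectors $(n,2k)$ and $(2k,n)$ with positive probability (since every acceptance of a new $(n,2k)$-offspring preserves its distinctness from existing copies when $\Delta=0$ ties are broken uniformly). I would then track a pair of such copies over time; because equal-objective ties are broken without bias, the Hamming distance between two copies evolves as a random walk driven by mutation, so it spreads over $[0..2k]$ rather than collapsing to zero.

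The main obstacle is Phase~3. Without a mechanism that pins a maximally-distant pair to the boundary of the selection procedure, I cannot invoke the clean $\Omega(1/(\mu^2 4^k))$ estimate of Theorem~\ref{thm:nsga1}. Instead, I would consider a randomly selected pair of parents $(\bmx,\bmy)$ with $\bmf(\bmx) = \bmf(\bmy) = (n,2k)$, and decompose the probability that uniform crossover plus bit-wise mutation yields $1^n$ into a sum over their Hamming distance $2j \in [0..2k]$: each term contributes a binomial coefficient, a $2^{-2j}$ factor for crossover exchanging the right $2j$ bits, and a $(1/n)^{2(k-j)}$ factor for mutation fixing the remaining $2(k-j)$ disagreement bits. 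A Stirling-type estimate of this sum, of the kind used in the running-time analysis of the original NSGA-II on \ojzj, produces a per-generation success probability of $\Omega(1/(\sqrt{k}(Cn/k)^k))$. Since SMS-EMOA creates one offspring per generation, a further factor $1/\mu^2$ for picking the right parent pair yields the overall bound $O(\mu^2 \sqrt{k}(Cn/k)^k)$. The hardest step will be rigorously establishing that the marginal distribution of the Hamming distance among equal-objective copies is spread enough for this averaging to dominate; I expect to transplant the martingale/potential-function argument from the original NSGA-II analysis onto the SMS-EMOA update rule, verifying that the one-offspring-per-generation structure does not destroy the required mixing.
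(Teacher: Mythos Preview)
Your three-phase decomposition and the Phase~1 bound match the paper, but your Phase~2/3 strategy diverges substantially and carries the gap you yourself flag.

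The paper does \emph{not} analyze the stationary or mixing behaviour of the Hamming distance among equal-objective copies, and it does not average a crossover success probability over a distribution of distances. Instead it runs a ``lucky run'' restart argument. It first shows that in any single generation the current maximal-distance pair $(\bmx^*,\bmy^*)$ at objective $(n,2k)$ survives with probability at least $(\mu-2)/(\mu+2)$ (this is where $\mu\ge 2(n-2k+3)$ is used: it forces $\ge\mu/2$ zero-$\Delta$ solutions in $R_1$, so a uniform tie-break rarely hits $\bmx^*$ or $\bmy^*$). It then considers $k-j+1$ consecutive blocks of $\mu$ generations; in block $i$ one asks that $J_{\max}$ increases by $2$ (probability $\Omega((k-i)(n-k-i)/n^2)$ per generation via Eq.~\eqref{eq:mu1}) \emph{and} the pair survives all $\mu$ generations (probability $((\mu-2)/(\mu+2))^\mu=\Omega(1)$). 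The final block asks that crossover on a distance-$2k$ pair yields $1^n$. Multiplying, the probability of this whole sequence is at least $\Omega(1)^k\cdot k!(n-2k)^k/(\mu n^{2k})\cdot 4^{-k}$, and Stirling on $k!$ produces the $(k/(Cn))^k\sqrt{k}/\mu$ lower bound. Geometric restarts over blocks of length $\mu(k+1)$ give the theorem. So the $(Cn/k)^k$ factor arises from the \emph{product of $J_{\max}$-increase probabilities}, not from an averaged crossover bound.

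Your route could in principle reach the same bound, but the mixing statement you would need---that the Hamming distance among copies at $(n,2k)$ has, in steady state, probability $\Omega((k/n)^{k-j})$ of equalling $2j$ for suitable $j$---is strictly harder than what the paper proves, and the SMS-EMOA one-offspring update with uniform tie-breaking is not obviously contractive toward any nice distribution. (Also, a minor slip: with Hamming distance $2j$ the two parents share $k-j$ common $0$-positions, so the mutation factor is $(1/n)^{k-j}$, not $(1/n)^{2(k-j)}$.) The paper's restart argument is simpler precisely because it never needs to characterise this distribution; I would adopt it.
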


\begin{proof}\label{proof:sms1}
%Similar to the proof of Theorem~\ref{thm:nsga1}, 
We  divide the optimization procedure into three phases:
the first phase starts after initialization and finishes until the inner part $F_I^*$ of the Pareto front is found; the second phase starts after the first phase and finishes until the maximal Hamming distance of the solutions which have the objective vector $(n,2k)$ (i.e., have $(n-k)$ 1-bits) reaches $2k$; the third phase starts after the second phase and finishes when the extreme Pareto optimal solution $1^n$  is found. Note that the analysis for finding $0^n$ holds similarly.

By Theorem 1 in ~\cite{bian23stochastic}, the expected number of fitness evaluations of the first phase is $O(\mu (n \log n + k^k))$, where the term $O(\mu k^k)$ is the expected number of fitness evaluations for finding one objective vector in $F_I^*$ when event $E$, i.e., any solution in the initial population has at most $(k-1)$ or at least $(n-k+1)$ 1-bits, happens.
By Chernoff bound and $k=o(\sqrt{n})$, an initial solution has at most $(k-1)$ or at least $(n-k+1)$  1-bits with probability $\exp(-\Omega(n))$, implying that event $E$ happens with probability $\exp(-\Omega(n))^{\mu}$. Thus, the term $O(\mu k^k)$ can actually be omitted, implying that the expected number of fitness evaluations of the first phase is $O(\mu n \log n)$. 

Next, we consider the second and the third phases. Similar to Theorem ~\ref{thm:nsga1}, we use $J_{\max} = \max_{\bmx,\bmy \in P: \bmf(\bmx) = \bmf(\bmy) = (n,2k)} H(\bmx,\bmy)$ to denote the maximal Hamming distance of the solutions which have the objective vector $(n,2k)$.
However, without the diversity maintenance method, the  solutions corresponding to $J_{\max}$ may not be maintained in the next population. Assume that $\bmx^*$ and $\bmy^*$ are a pair of solutions corresponding to $J_{\max}$, and we will show that they can be maintained in the next population with probability at least $(\mu-2)/(\mu+2)$. Since $\bmx^*$ and $\bmy^*$ are Pareto optimal, they must belong to  $R_1$ in the non-dominated sorting procedure, implying that they will not be removed if there exist  dominated solutions in $P\cup \{\bmx''\}$, where $\bmx''$ denotes the offspring solution. 
 Now, we consider the case that all the solutions in  $P\cup \{\bmx''\}$ belong to $R_1$. Since any solution with the number of 1-bits in $[1..k-1]\cup [n-k+1..n-1]$ is dominated by $\bmx^*$ and $\bmy^*$, $R_1$ can only contain Pareto optimal solutions. If at least two solutions in $P\cup \{\bmx''\}$ have the same objective vector, then they must have a zero $\Delta$ value, because removing one of them will not decrease the hypervolume covered. Thus, for each objective vector, at most one solution can have a $\Delta$ value larger than zero. Let $l$ denote the number of solutions in $R_1$ with $\Delta$ value larger than 0, and we have $l\le n-2k+3\le \mu/2$, where $(n-2k+3)$ is the size of the Pareto front. 
 Then, $\bmx^*$ and $\bmy^*$ can be maintained in the next population with probability at least $(|P\cup \{\bmx''\}|-l-2)/(|P\cup \{\bmx''\}|- l) \ge (\mu +1 -2 - \mu/2 )/(\mu+1-\mu/2) = (\mu-2)/(\mu+2)$. 

Assume that currently $J_{\max}=2j<2k$, and  we consider consecutive $k-j+1$ stages, where each stage consists of $\mu$ generations: in the $i$-th ($1\le i\le k-j$) stage, $J_{\max}$ increases to $2(j+i)$ and the pair of solutions corresponding to $J_{\max}$ are maintained in the population; in the $(k-j+1)$-th stage, $1^n$ is found.
 By Eqs.~\eqref{eq:mu1} and~\eqref{eq:mu3}, the probability of increasing $J_{\max}$ and finding $1^n$ (i.e., $J_{\max}=2k$) in each generation is at least $(1-p_c)(k-j)(n-k-j)/(e\mu n^2)$ and $\Omega(1/(\mu^2 4^k))$, respectively.
 Thus, the above sequence of events can happen with probability at least 
 \begin{equation}
 \begin{aligned}
 &\bigg(1-\Big(1-\Omega\Big(\frac{1}{\mu^2 4^{k}}\Big)\Big)^{\mu}\bigg)\Big(\frac{\mu-2}{\mu+2}\Big)^\mu \\
 &\cdot \prod_{i=j}^{k-1} \bigg(1-\left(1-\frac{(1-p_c)(k-i)(n-k-i)}{e \mu n^2}\right)^{\mu}\bigg) \Big(\frac{\mu-2}{\mu+2}\Big)^\mu \\
&\ge \Omega\Big(\frac{1}{\mu 4^{k}}\Big) \cdot \prod_{i=0}^{k-1} \Omega\bigg(\frac{(k-i)(n-k-i)}{n^2}\bigg)\\
&\ge  \frac{\Omega(1)^k k!(n-2k)^k}{\mu n^{2k}} \ge \Big(\frac{\Omega(1)k}{n}\Big)^k \Big( 1 - \frac{1}{\sqrt{n}} \Big)^{\sqrt{n}-1}\cdot \frac{\sqrt{k}}{\mu}\\
&\ge \Big(\frac{k}{Cn}\Big)^k\cdot \frac{\sqrt{k}}{e\mu},
 \end{aligned}
 \end{equation}
where the first inequality holds by $1+a \le e^a$ for any $a \in \mathbb{R}$ and $p_c \in [\Omega(1), 1-\Omega(1)]$, the third inequality holds by Stirling's formula and $k = o(\sqrt{n})$, and $C$ is a constant. Thus, the expected number of trails until the above sequence of events happen is at most $e\mu/(\sqrt{k})\cdot (Cn/k)^k$, implying that $1^n$ can be found in at most $e\mu/(\sqrt{k})\cdot (Cn/k)^k\cdot \mu(k+1)  = O(\mu^2 \sqrt{k} (Cn/k)^k)$ expected number of generations because the length of the above sequence of events is at most $\mu (k+1)$.

Combining the three phases, the total expected number of fitness evaluations is $O(\mu n \log n) + O(\mu^2 \sqrt{k} (Cn/k)^k) = O(\mu^2 \sqrt{k} (Cn/k)^k)$, which leads to the theorem.
\end{proof}

\begin{theorem}\label{thm:sms2}
For SMS-EMOA solving \ojzj\ with $k\le n/4$, if using the diversity maintenance method, and a population size $\mu$ such that $\mu\ge 2(n-2k+3)$, then the expected number of fitness evaluations for finding the Pareto front is 
 $O(\mu^2 4^k + \mu n\log n)$.
\end{theorem}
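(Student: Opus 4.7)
The plan is to adapt the three-phase decomposition from the proof of Theorem~\ref{thm:sms1}, but replace its coupled ``single sequence of $k{+}1$ events within $\mu(k{+}1)$ generations'' argument with a cleanly decoupled analysis that is made possible by a monotonicity lemma for $J_{\max}$ under the modified update. Concretely, let the three phases be as in Theorem~\ref{thm:sms1}: (i) cover the inner front $F_I^*$; (ii) raise $J_{\max}=\max_{\bmx,\bmy\in P:\bmf(\bmx)=\bmf(\bmy)=(n,2k)}H(\bmx,\bmy)$ from the value it has at the end of Phase~1 up to $2k$; and (iii) use uniform crossover on the witnessing pair to obtain $1^n$ (symmetrically for $0^n$). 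Phase~1 carries over verbatim from the proof of Theorem~\ref{thm:sms1}: the diversity maintenance method only changes the tiebreaker among duplicates and does not slow down the production of new Pareto vectors, so the Chernoff argument for event $E$ still gives $O(\mu n\log n)$ fitness evaluations.

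The core new step is the monotonicity of $J_{\max}$ in Phase~2. I would argue as follows. After Phase~1 every vector of $F_I^*$ is present, so any member with objective vector $(n,2k)$ lies in $R_1$. If $R_v\neq R_1$, then the removed solution is dominated and the $(n,2k)$-witnesses are trivially safe. If $R_v=R_1$, then every solution in $P\cup\{\bmx''\}$ is Pareto optimal, and by pigeonhole on the $\mu+1\ge 2(n-2k+3)+1$ Pareto-optimal solutions distributed among the at most $n-2k+3$ Pareto vectors, the most crowded vector has multiplicity at least $\lceil(\mu+1)/(n-2k+3)\rceil\ge 3$. Hence the $|S|>2$ branch of the modified update fires; if $\bm g^*\neq(n,2k)$ the witnesses are untouched, and if $\bm g^*=(n,2k)$ the pair attaining $J_{\max}$ is precisely $(\tilde{\bmx},\tilde{\bmy})$ and is excluded from removal. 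This gives $J_{\max}$ nondecreasing across every generation of Phases~2 and~3. The progress analysis of Phase~2 then reuses Eq.~\eqref{eq:mu1} verbatim (SMS-EMOA selects a parent uniformly at random, exactly like $(\mu+1)$-GA): from $J_{\max}=2j<2k$, flipping a matched $0$-bit/$1$-bit pair in the common positions of a witness solution succeeds with probability $\Omega((k-j)(n-k-j)/(\mu n^2))$ per generation, and summing reciprocals as in Eq.~\eqref{eq:mutation_for_H_2k} yields $O(\mu n\log n)$ fitness evaluations for Phase~2. The symmetric argument for the $(2k,n)$ side runs in parallel and gives the same bound.

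Phase~3 is then immediate: once $\bmx^*,\bmy^*$ with $\bmf(\bmx^*)=\bmf(\bmy^*)=(n,2k)$ and $H(\bmx^*,\bmy^*)=2k$ coexist (and, by the monotonicity lemma, persist), selecting them as the two parents, triggering crossover, setting the $2k$ differing bits to ones under uniform crossover, and leaving all bits untouched by mutation succeeds with probability at least $\Omega(1/(\mu^2 4^k))$ per generation as in Eq.~\eqref{eq:mu3}, so $1^n$ is found in $O(\mu^2 4^k)$ expected generations (the argument for $0^n$ is symmetric). Summing the three phases and using that SMS-EMOA performs one fitness evaluation per generation gives $O(\mu n\log n)+O(\mu n\log n)+O(\mu^2 4^k)=O(\mu^2 4^k+\mu n\log n)$, as claimed. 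The main obstacle is Step~(ii)'s monotonicity proof: unlike NSGA-II, where the crowding-distance reordering in Theorem~\ref{thm:nsga1} directly controls who gets the nonzero crowding distance, here I must first show that $R_v$ degenerates to the full Pareto-optimal set before the diversity branch can be invoked, and then use the precise calibration $\mu\ge 2(n-2k+3)$ to guarantee the pigeonhole lower bound $|S|\ge 3$ that triggers the diversity rule; once this structural claim is secured, the remainder is a reuse of Eqs.~\eqref{eq:mu1} and~\eqref{eq:mu3}.
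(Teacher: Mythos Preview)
Your proposal is correct and follows essentially the same argument as the paper's proof: the same three-phase decomposition, the same case split on whether $R_v=R_1$, the same pigeonhole/contrapositive argument (using $\mu\ge 2(n-2k+3)$) to force $|S|\ge 3$ so that the diversity branch fires and protects a max-Hamming-distance pair, and the same reuse of Eqs.~\eqref{eq:mu1}, \eqref{eq:mutation_for_H_2k}, and \eqref{eq:mu3} for Phases~2 and~3. The only cosmetic differences are that you phrase the $|S|\ge 3$ bound as a direct pigeonhole inequality rather than a contradiction from $|R_1|\le 2|G|$, and you explicitly note that SMS-EMOA does one evaluation per generation; neither changes the substance.
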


\begin{figure*}[!t]\centering
	\begin{minipage}[c]{0.25\linewidth}\centering
		\includegraphics[width=1.07\linewidth]{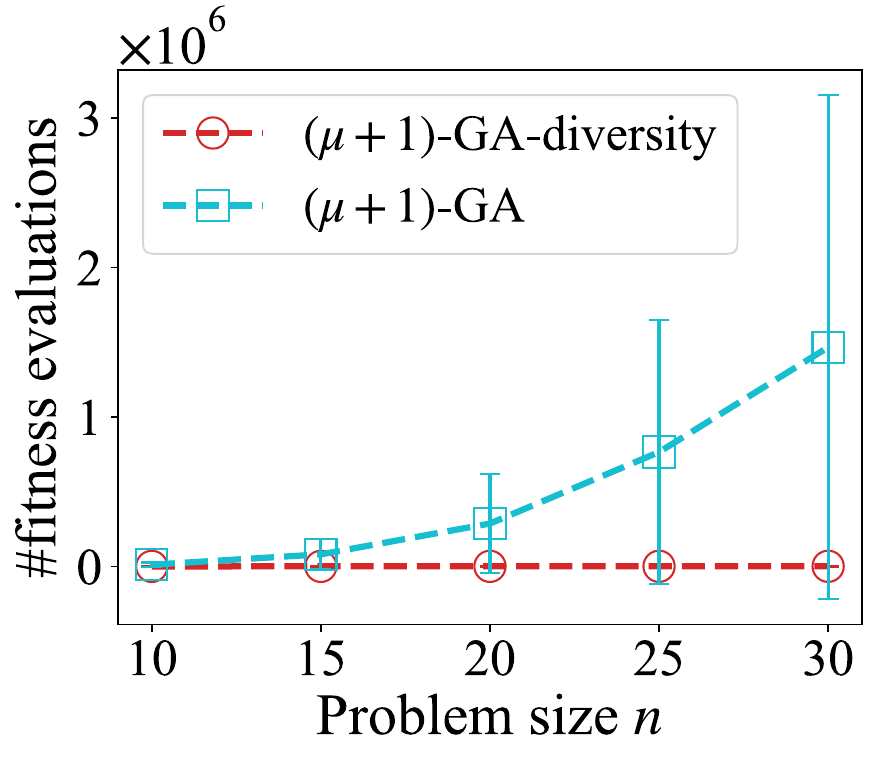}
	\end{minipage}\hspace{1.3em}
	\begin{minipage}[c]{0.25\linewidth}\centering
		\includegraphics[width=1.07\linewidth]{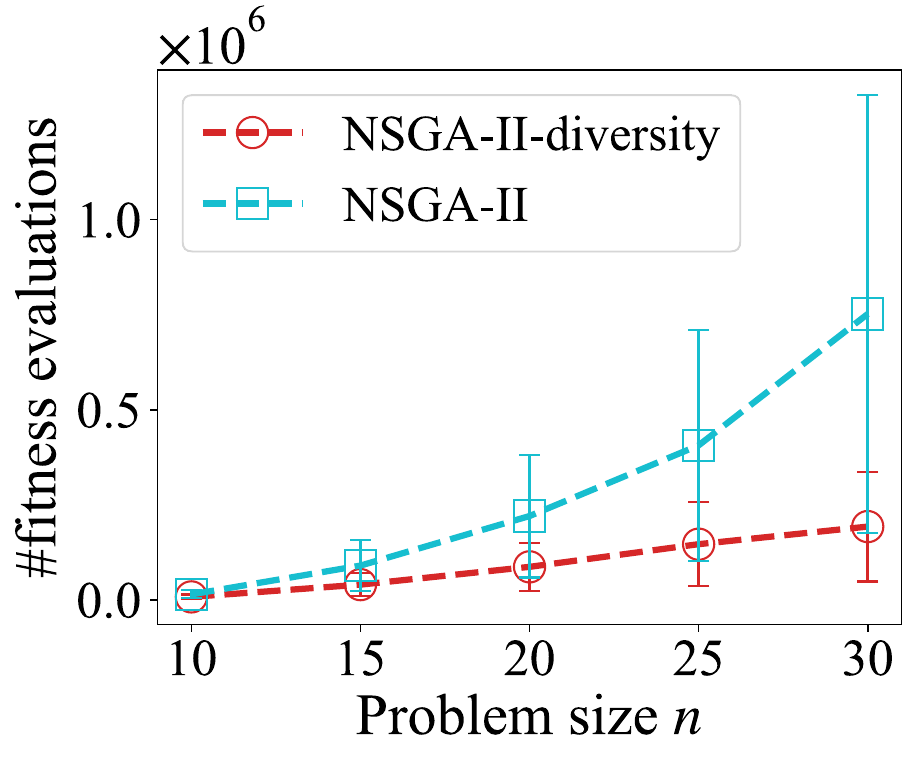}
	\end{minipage}\hspace{1.3em}
        \begin{minipage}[c]{0.25\linewidth}\centering
		\includegraphics[width=1.07\linewidth]{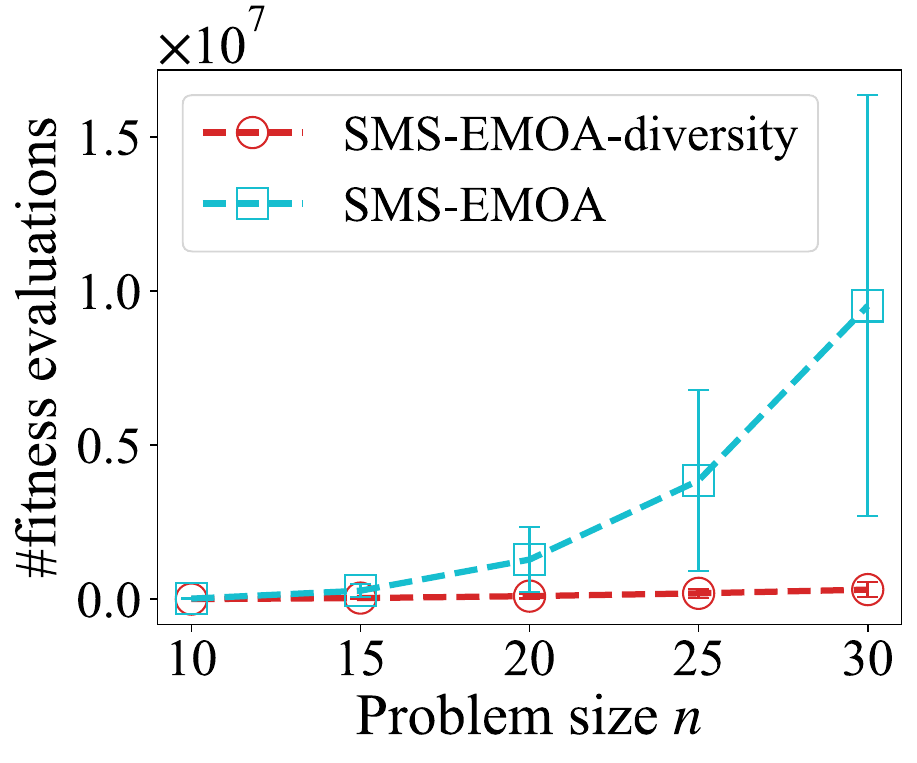}
	\end{minipage}
    \\\vspace{0.2em}
	\begin{minipage}[c]{0.25\linewidth}\centering
		\small(a) ($\mu+1$)-GA
	\end{minipage}
	\begin{minipage}[c]{0.29\linewidth}\centering
		\small(b) NSGA-II
	\end{minipage}
	\begin{minipage}[c]{0.25\linewidth}\centering
		\small(c) SMS-EMOA
	\end{minipage}\\
	\caption{Average number of fitness evaluations of $(\mu+1)$-GA for solving Jump, and NSGA-II/SMS-EMOA for solving \ojzj, when the diversity maintenance method is used or not.}
        \label{fig:main}\vspace{-0.8em}
\end{figure*}

\begin{proof}
Similar to the proof of Theorem~\ref{thm:sms1}, we divide the optimization procedure into three phases. The proof of the first phase is the same (i.e., $O(\mu n \log n)$ expected number of fitness evaluations is required to find the inner part $F_I^*$ of the Pareto front), and we only need to consider the second and the third phases.

 For the second phase, we still define $J_{\max}=\max_{\bmx,\bmy \in P: \bmf(\bmx) = \bmf(\bmy) = (n,2k)} H(\bmx,\bmy)$. Then, we will show that $J_{\max}$ will not decrease by using the proposed diversity maintenance method. Because $(n,2k)$ is a point in the Pareto front, any corresponding solution (which has $(n-k)$ 1-bits) must have rank 1, i.e., belong to $R_1$ in the non-dominated sorting procedure. If $|R_1| \le \mu$, then all the solutions in $R_1$ will be maintained in the next population, implying that the claim holds. Now we consider the case that $|R_1| = \mu+1$, i.e., all the solutions in $P\cup \{\bmx''\}$ are non-dominated,  where $\bmx''$ denotes the offspring solution. 
 Since any solution with the number of 1-bits in $[1..k-1]\cup [n-k+1..n-1]$ is dominated by a solution with $(n-k)$ 1-bits, $R_1$ can only contain Pareto optimal solutions, implying that $|G|< n-2k+3$, where $G=\{\bmf(\bmx)\mid \bmx\in R_1\}$ and the inequality holds because the whole Pareto front has not been covered.
 Suppose $\bmv\in G$ is the most crowded objective vector, i.e., corresponding to the most number of solutions in $R_1$, and let $S = \{ \bmx \in R_1 | \bmf(\bmx) = \bmv\}$ denote the set of solutions corresponding to $\bmv$.  
 Then, we have $|S|\ge 3$, because otherwise $|R_1|\le 2|G|<2(n-2k+3)$, leading to a contradiction (with $|R_1|=\mu+1\ge 2(n-2k+3)+1$).
 Assume that $\bmx^*$ and $\bmy^*$ are a pair of solutions corresponding to $J_{\max}$. If $\bmx^*$ and $\bmy^*$ don't belong to $S$, they will not be removed; if $\bmx^*$ and $\bmy^*$ belong to $S$, then a random solution in $S$ excluding $\bmx^*$ and $\bmy^*$ will be removed by the diversity maintenance method and the fact that $|S|\ge 3$. Thus, $J_{\max}$ will not decrease. 
 Then, we can directly apply Eqs.~\eqref{eq:mu1} and~\eqref{eq:mutation_for_H_2k}, implying that the expected number of fitness evaluations for increasing $J_{\max}$ to $2k$ is at most $O(\mu n\log n)$. 

For the third phase, the analysis is the same as that of Eq.~\eqref{eq:mu3}. That is, the expected numer of fitness evaluations of the third phase, i.e., finding $1^n$, is $O(\mu^2 4^k)$.
Combining the three phases, the total expected number of fitness evaluations is $O(\mu n \log n) + O(\mu n\log n) + O(\mu^2 4^k) = O(\mu n\log n+ \mu^2 4^{k}) $, which leads to the theorem.
\end{proof}

By comparing Theorems~\ref{thm:sms1} and~\ref{thm:sms2}, we can find that by using the diversity maintenance method, the expected number of fitness evaluations of SMS-EMOA for solving \ojzj\ can be reduced by a factor of $\Omega(\sqrt{k}(Cn/(4k))^{k}/\log n)$, which is polynomial for a constant $k$ and exponential for large $k$, e.g., $k=n^{1/4}$. The main reason for the acceleration is just similar to that of NSGA-II. 
That is, for Pareto optimal solutions with $(n-k)$ (or $k$) 1-bits, the proposed method always maintains the two ones with the largest Hamming distance in the population update procedure, which enables the algorithm to quickly increase the largest Hamming distance until reaching the maximum $2k$, i.e., finishing the second phase.

\section{Experiments}

In the previous sections, we have proved that using the proposed diversity maintenance method in $(\mu+1)$-GA, NSGA-II and SMS-EMOA can bring a significant acceleration for solving Jump and OneJumpZeroJump. 
However, as only upper bounds on the running time of the original algorithms have been derived, we conduct experiments to examine their actual performance to complement the theoretical results.

Specifically, we set the problem size $n$ of Jump and \ojzj\ from $10$ to $30$, with a step of $5$, and set the parameter $k$ of the two problems to 4.
For $(\mu+1)$-GA solving Jump, the population size $\mu$ is set to 2, as suggested in Theorem~\ref{thm:mu1}; while for NSGA-II and SMS-EMOA solving \ojzj, the population size $\mu$ is set to $4(n-2k+3)$ and $2(n-2k+3)$, respectively, as suggested in Theorems~\ref{thm:nsga1} and~\ref{thm:sms2}. 
For each $n$, we run an algorithm $1000$ times independently, and record the average number of fitness evaluations until the optimal solution (for Jump) or the  Pareto front (for \ojzj) is found. 
We can observe from Figure~\ref{fig:main} that using the proposed method can bring a clear acceleration. 
% Additionally, we performed a $t$-test on the results, and the order of magnitude of the $p$-values is $10^{-100}$ , implying that our experimental results are extremely significant.

\section{Conclusion}

% This paper gives a theoretical study for EAs solving an important class of optimization problems, MMOPs, with wide applications. 
This paper gives a theoretical study for EAs solving MMOPs, a class of optimization problems with wide applications.
Considering the characteristic of MMOPs that multiple (local) optimal solutions in the solution space correspond to a single point in the objective space, we propose to prefer those solutions diverse in the solution space when they are equally good in the objective space. We show that such a method, working with crossover, can benefit the evolutionary search via rigorous running time analysis. Specifically, we prove that for $(\mu+1)$-GA solving the widely studied single-objective problem, Jump, as well as NSGA-II and SMS-EMOA solving the widely studied bi-objective problem, OneJumpZeroJump, the proposed method can lead to polynomial or even exponential acceleration on the expected running time, which is also verified by the experiments. Note that diversity has been theoretically shown to be helpful for evolutionary optimization in various scenarios~\cite{friedrich2009comparison,Qian13,doerr2016runtime,osuna2020design,sudholt2020benefits,doerr2023lasting,chao2024quality}. This work theoretically shows the benefit of maintaining the diversity in the solution space for EAs solving MMOPs, thus contributing to this line of research. We hope our results can be beneficial for the design of better EAs for solving MMOPs, and may also encourage the diversity exploration in the solution space for multi-objective EAs, which is often ignored. 

\section*{Acknowledgments}

This work was supported by the National Science and Technology Major Project (2022ZD0116600) and National Science Foundation of China (62276124). Chao Qian is the corresponding author. The conference version of this paper has appeared at IJCAI'24.

%% The file named.bst is a bibliography style file for BibTeX 0.99c
\bibliographystyle{named}
\bibliography{ijcai24-multimodal}

\end{document}